\newcommand{\hq}[1]{\textcolor{black}{#1}}
\DeclareMathOperator*{\argmin}{arg\,min}
\providecommand{\0}{\mathbf{0}}
\providecommand{\cc}{\mathbf{c}}
\providecommand{\ee}{\mathbf{e}}
\renewcommand{\ss}{\mathbf{s}}
\providecommand{\uu}{\mathbf{u}}
\providecommand{\xx}{\mathbf{x}}
\providecommand{\zz}{\mathbf{z}}
\providecommand{\cC}{\mathcal{C}}
\providecommand{\cG}{\mathcal{G}}
\providecommand{\cI}{\mathcal{I}}
\providecommand{\cL}{\mathcal{L}}
\providecommand{\cN}{\mathcal{N}}
\providecommand{\cP}{\mathcal{P}}
\providecommand{\cS}{\mathcal{S}}
\providecommand{\cT}{\mathcal{T}}
\providecommand{\cX}{\mathcal{X}}
\providecommand{\cZ}{\mathcal{Z}}
\providecommand{\R}{\mathbb{R}} 
\providecommand{\mI}{\mathbf{I}}
\providecommand{\mJ}{\mathbf{J}}
\providecommand{\mM}{\mathbf{M}}
\providecommand{\mT}{\mathbf{T}}
\newcommand{\abs}[1]{\left\lvert#1\right\rvert}
\newcommand{\norm}[1]{\left\lVert#1\right\rVert}
\renewcommand{\paragraph}[1]{\noindent\textbf{#1}}
\newcommand{\senc}[1]{\textcolor{red}{#1}}
\title{\hq{Counterfactual Generation} with Identifiability Guarantees}
\newcommand{\draft}[1]{\textcolor{black}{#1}}
\author{%
Hanqi Yan$^{1,4}$\thanks{Equal Contribution. Work was done when Hanqi Yan was a visiting student at MBZUAI. Correspondence to: Yulan He (yulan.he@kcl.ac.uk) and Kun Zhang (kunz1@cmu.edu).}, Lingjing Kong$^{2*}$, Lin Gui$^{3}$, Yuejie Chi$^{2}$, Eric Xing$^{2,4}$, Yulan He$^{1,3}$, Kun Zhang$^{2,4}$
  \\
  $^{1}$University of Warwick,$^{2}$ Carnegie Mellon University,\\ $^{3}$King's College London, $^{4}$Mohamed Bin Zayed University of Artificial Intelligence
}
\begin{document}
\maketitle

\begin{abstract}
    Counterfactual generation lies at the core of various machine learning tasks, including image translation and controllable text generation. This generation process usually requires the identification of the disentangled latent representations, such as content and style, that underlie the observed data. 
    However, it becomes more challenging when faced with a scarcity of paired data and labeling information.
    Existing disentangled methods crucially rely on oversimplified assumptions, such as assuming independent content and style variables, to identify the latent variables, even though such assumptions may not hold for complex data distributions. 
    For instance, food reviews tend to involve words like ``\textit{tasty}'', whereas movie reviews commonly contain words such as ``\textit{thrilling}'' for the same positive sentiment.
    This problem is exacerbated when data are sampled from multiple domains since the dependence between content and style may vary significantly over domains.    
    In this work, we tackle the domain-varying dependence between the content and the style variables inherent in the counterfactual generation task.
    We provide identification guarantees for such latent-variable models by leveraging the relative sparsity of the influences from different latent variables.
    Our theoretical insights enable the development of a  do\textbf{M}ain \textbf{A}dap\textbf{T}ive coun\textbf{T}erfactual g\textbf{E}neration model, called (\textbf{MATTE}). 
    Our theoretically grounded framework achieves state-of-the-art performance in unsupervised style transfer tasks, where neither paired data nor style labels are utilized, across four large-scale datasets. Code is available at \href{https://github.com/hanqi-qi/Matte.git}{https://github.com/hanqi-qi/Matte.git}.
\end{abstract}


\section{Introduction}\label{sec:intro}
Counterfactual generation serves as a crucial component in various machine learning applications, such as controllable text generation and image translation. These applications aim at producing new data with desirable style attributes (e.g., \emph{sentiment}, \emph{tense}, or \emph{hair color}) while preserving the other core information (e.g., \emph{topic} or \emph{identity})~\citep{DBLP:conf/emnlp/LiZGCBDS19,DBLP:conf/naacl/LiLXL22,xie2023multidomain,Isola_2017_CVPR,Zhu_2017_ICCV}. 
Consequently, the central challenge in counterfactual generation is to learn the underlying disentangled representations.

To achieve this goal, prior work leverages either paired data that only differ in style components~\citep{Rao2018DearSO,Shang2019SemisupervisedTS,Xu2019FormalityST,wang2019harnessing}, or utilizes style labeling information~\citep{DBLP:conf/acl/JohnMBV19,DBLP:journals/corr/abs-2002-03912,Dathathri2020Plug,DBLP:conf/naacl/YangK21,liu2022composable}.
However, collecting paired data or labels can be labour-intensive and even infeasible in many real-world applications~\citep{chou2022counterfactuals,calderon-etal-2022-docogen,xie2023multidomain}. 
\draft{This has prompted recent work~\citep{kong2022partial,xie2023multidomain} to delve into unsupervised identification of latent variables by tapping into multiple domains.} 
To attain identifiability guarantees, a prevalent assumption made in these works~\citep{kong2022partial,xie2023multidomain} is that the content and the style latent variables are independent of each other.
However, this assumption is often violated in practical applications.
First, the dependence between content and style variables can be highly pronounced.
For example, to express a positive sentiment, words such as ``\textit{tasty}'' and ``\textit{flavor}'' are typically used in conjunction with food-related content. 
In contrast, words like ``\textit{thrilling}'' are more commonly used with movie-related content~\citep{DBLP:conf/emnlp/LiZGCBDS19,DBLP:conf/naacl/LiLXL22}.
Moreover, the dependence between content and style often varies across different distributions. For example, a particular cuisine may be highly favored locally but not well received internationally.
This varying dependence between content and style variables poses a significant challenge in obtaining the identifiability guarantee.
To the best of our knowledge, this issue has not been addressed in previous studies. 

In this paper, we address the identification problem of the latent-variable model that takes into account the varying dependence between content and style (see Fig~\ref{fig:causal_graph}).
To this end, we adopt a natural notion of influence sparsity inherent to a range of unstructured data, including natural languages, for which the influences from the content and the style differ in their scopes. 
Specifically, the influence of the style variable on the text is typically sparser compared to that of the content variable, as it is often localized to a smaller fraction of the words~\citep{DBLP:conf/naacl/LiJHL18} and plays a secondary role in word selection.
For instance, the tense of a sentence is typically reflected in only its verbs which are \draft{affected by the sentence content information}. Our contributions can be summarised as:
1) We show identification guarantees for both the content and the style components, even when their interdependence varies. 
This approach removes the necessity for a large number of domains with specific variance properties~\citep{kong2022partial,xie2023multidomain}.
2) Guided by our theoretical findings, we design a do\textbf{M}ain \textbf{A}dap\textbf{T}ive coun\textbf{T}erfactual g\textbf{E}neration model (\textbf{MATTE}). \draft{It does not require paired data or style annotations but allows style intervention, even across substantially different domains.}
3) We validate our theoretical discoveries by demonstrating state-of-the-art performance on the unsupervised style transfer task, which demands representation disentanglement, an integral aspect of counterfactual generation. 

\vspace{-1em}
\section{Related work} \label{sec:related_work}
\vspace{-1em}
\paragraph{Label-free Style Transfer on variation autoencoders (VAEs).}
To perform style transfer, existing methods that use parallel or non-parallel labelled data often rely on style annotations to refine the attribute-bearing representations, although some argue that disentanglement is not necessary for style edit~\citep{DBLP:conf/emnlp/SudhakarUM19,wang2019controllable,dai-etal-2019-style}. In practice, disentangled methods typically employ adversarial objectives to ensure the content encoder remains independent of style information or leverage style discriminators to refine the derived style variables~\citep{hu2017toward,DBLP:journals/corr/ShenLBJ17,DBLP:conf/naacl/LiJHL18,DBLP:journals/corr/abs-1909-05858,DBLP:conf/acl/JohnMBV19, DBLP:conf/emnlp/SudhakarUM19,Dathathri2020Plug,hu2017toward,Dathathri2020Plug,DBLP:conf/naacl/YangK21,liu2022composable}. Several studies have tackled this task without style labels.~\citet{DBLP:journals/corr/abs-2010-03802} emphasized the implicit style connection between adjacent sentences and used T5~\citep{raffel2020exploring} to extract the style vector for conditional generation. CPVAE~\cite{DBLP:conf/icml/XuCC20} split the latent variable into content and style variables and mapped them to a $k$-dimensional simplex for $k$-way sentiment modeling. Our work aligns more closely with CPVAE and follows VAE-based label-free disentangled learning from data-generation perspective~\cite{higgins2017betavae,DBLP:conf/iclr/0001SB18,DBLP:conf/icml/MathieuRST19,DBLP:conf/icml/XuCC20,pmlr-v139-mita21a,fan2023cf}. 

\paragraph{Latent-variable identification.}
Representation learning serves as the cornerstone for generative models, where the goal is to create representations that effectively capture underlying factors in the disentangled data-generation process.
In the realm of linear generative functions, Independent component analysis (ICA)~\citep{comon1994independent,bell1995information} is a classical approach known for its identifiability. 
However, when dealing with nonlinear functions, ICA is proven unidentifiable without the inclusion of additional assumptions~\citep{hyvarinen1999nonlinear}. 
To tackle this problem, recent work incorporated supplementary information~\citep{hyvarinen2019nonlinear,sorrenson2020disentanglement,halva2020hidden,khemakhem2020variational,kong2022partial}, e.g., class/domain labels.
However, these approaches require a number of domains/classes that are twice the number of latent components, which can be unfeasible when dealing with high-dimensional representations.    
Another line of work~\citep{zimmermann2022contrastive,von2021self,locatello2020weakly,gresele2019incomplete,kong2023identification,Kong_2023_CVPR} leverages paired data (e.g., two rotated versions of the same image) to identify the shared latent factor within the pair.
The third line of work~\citep{lachapelle2022disentanglement,yang2022nonlinear,zheng2022on} makes sparsity assumptions on the nonlinear generating function.
Although their sparsity assumption alleviates some of the explicit requirements in the previous two types of work, it may not hold for complex data distributions.
For instance, in the case of generating text, each topic-related latent factor may influence a large number of components.
Instead, we adopt a \textsl{relative} sparsity assumption, where we only require the influence of one subspace to be sparser than the other.
Unlike prior work~\citep{zheng2022on}, each latent variable is allowed to influence a non-sparse set of components, and the influence can overlap arbitrarily within each subspace.
Importantly, we necessitate neither many domains/classes nor paired data as prior work mentioned above.

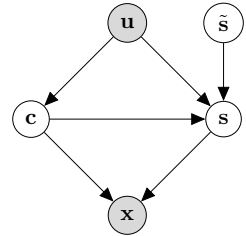
\begin{wrapfigure}[18]{R}{0.33\textwidth}
    \vspace{-2em}
    \begin{center}
        \resizebox{0.7\linewidth}{!}{
        \begin{tikzpicture}[node distance=2cm, ->, >=triangle 45]
            \node[draw, circle, fill=gray!30] (X) at (0,0) {$\xx$};
            \node[draw, circle] (C) at (-1.5,1.5) {$\cc$};
            \node[draw, circle] (S) at (1.5,1.5) {$\ss$};
            \node[draw, circle, fill=gray!30] (U) at (0,3) {$\uu$};
            \node[draw, circle] (tilde_S) at (1.5,3) {$\tilde{\ss}$};
            \draw[->, >=triangle 45] (S) to (X);
            \draw[->, >=triangle 45] (C) to (X);
            \draw[->, >=triangle 45] (C) to (S);
            \draw[->, >=triangle 45] (U) to (S);
            \draw[->, >=triangle 45] (tilde_S) to (S);
            \draw[->, >=triangle 45] (U) to (C);
        \end{tikzpicture}
        }
        \caption{
        \footnotesize
        \textbf{The data generation process}: The grey shading indicates the variable is observed. The observed variable (i.e., text) $\xx$ is generated from content $\cc$ and style $\ss$. 
        Both content $\cc$ and style $\ss$ are influenced by the domain variable $\uu$ and the content also influences the style.
        $\tilde{\ss}$ is the exogenous variable of $\ss$, representing the independent information of $\ss$.
        }
        \label{fig:causal_graph}
    \end{center}
    \vspace{-1em}
\end{wrapfigure}

\vspace{-.5em}

\section{Disentangled Representation for Counterfactual Generation} \label{sec:formulation}

\vspace{-.5em}

In this section, we discuss the connection between counterfactual generation and the identification of the data-generating process shown in Fig~\ref{fig:causal_graph}.

\paragraph{Disentangled latent representation.}
The data-generating process in Figure~\ref{fig:causal_graph} can be expressed in Equation~\ref{eq:generating_process}:
\begin{align} \label{eq:generating_process}
    \cc \sim p (\cc|\uu), \; \ss \sim p(\ss | \cc, \uu), \; \xx = g(\cc, \ss),
\end{align}
where the data (e.g., text) $\xx \in \cX $ are generated by latent variables $\zz:= [\cc, \ss] \in \cZ \subseteq \R^{d_{z}}$ through a smooth and invertible generating function $g(\cdot): \cZ \to \cX$.
The latent space comprises two subspaces: the content variable $\cc \in \cC \subseteq \R^{d_{c}}$ 
and the style variable $\ss \in \cS \subseteq \R^{d_{s}} $.
\draft{We define $\cc$ as the description of the main topic, e.g., ``We ordered the steak recommended by the waitress,'', and $\ss$ comprises supplementary details connected to the primary topic, e.g., the sentiment towards the dish, as exemplified in ``it was \textit{delicious}!''.} 
Consequently, the counterfactual generation task here is to preserve the content information $\cc$ while altering the stylistic aspects represented by $\ss$.

\paragraph{Content-style dependence.}
In many real-world problems, content $\cc$ can significantly impact style $\ss$. 
For instance, when it comes to the positive descriptions of food (content), words like ``\textit{delicious}'' are more prevalent than terms like ``\textit{effective}''.
Intuitively, the content $\cc$ acts to constrain the range of vocabulary choices for style $\ss$.
As a result, the counterfactually generated data should preserve the inherent relationship between $\cc$ and $\ss$.
We directly model this dependence as:
\begin{align} \label{eq:dependence}
    \ss := g_{s} (\tilde{\ss}; \cc, \uu),
\end{align}
where $g_{s}$ characterizes the influence from $\cc$ to $\ss$ and the exogenous variable $\tilde{\ss}$ accounts for the inherent randomness of $\ss$. In the running example, $\tilde{\ss}$ can be interpreted as the randomness involved in choosing a word from the vocabulary defined by content $\cc$, encompassing words similar to ``\textit{delicious}'' such as ``\textit{tasty}'',``\textit{yummy}''.
In contrast, prior work~\citep{kong2022partial,xie2023multidomain} assumes the independence between $\cc$ and $\ss$ thus neglecting this dependence.

\paragraph{Challenges from multiple domains.}
As we outlined in Section~\ref{sec:intro}, the ability to handle domain shift is crucial for unsupervised counterfactual generation.
Domain embedding $\uu$ represents a specific domain and the domain distribution shift influences both the marginal distribution of content $p(\cc|\uu)$ and the dependence of content on style $p(\ss|\cc, \uu)$.
The change in content distribution $p(\cc | \uu)$ across different domains $\uu$ reflects the variability in the subjects of sentences across these domains. For example, it can manifest as a change from discussing food in restaurant reviews to actors in movie reviews.
The change in content-style dependence $p(\ss | \cc, \uu)$ signifies that identical sentence subjects (i.e., content) could be associated with disparate stylistic descriptions in different domains.
For instance, the same political question could provoke significantly different sentiments among various demographic groups.
Such considerations are absent in prior work~\citep{kong2022partial}.
Here, we learn a shared model $(\hat{p}(\cc|\cdot), g_{s}(\tilde{\ss}, \cc, \cdot))$ and domain-specific embeddings $\uu$.
This approach enables effective knowledge transfer across domains and manages distribution shifts efficiently. For a target domain $\tau$, which may have limited available data, we can learn $\uu^{\tau}$ using a small amount of unlabeled data $\xx^{\tau}$ while preserving the multi-domain information in the shared model.

In light of the above discussion, the essence of counterfactual generation now revolves around the task of discerning the disentangled representation $(\cc, \tilde{\ss})$ within the data-generating process (Fig~\ref{fig:causal_graph}) across various domains with unlabeled data $(\xx, \uu)$:
if we can successfully identify $(\cc, \tilde{\ss})$, we could perform counterfactual reasoning by manipulating $\tilde{\ss}$ while preserving both the content information and the content-style dependence.


\section{Identifiability of the Latent Variables} \label{sec:theory}

In this section, we introduce the identification theory for the content $\cc$ and the style $\ss$ sequentially and then discuss their implications for methodological development.

We introduce notations and definitions that we use throughout this work.
When working with matrices, we adopt the following indexing notations: for a matrix $\mM$, the $i$-th row is denoted as $\mM_{i,:}$, the $j$-th column is denoted as $\mM_{:,j}$, and the $(i,j)$ entry is denoted as $[\mM]_{i,j}$.
We can also use this notation to refer to specific sets of indices within a matrix. 
For an index set $\cI \subseteq \{1, \dots, m \} \times \{1, \dots, n \}$, we use $\cI_{i,:}$ to denote the set of column indices whose row coordinate is $i$, i.e., $\cI_{i, :}:= \{ j: (i, j)\in \cI \}$, and analogously $\cI_{:,j}$ to denote the set of row indices whose column coordinate is $j$.

In addition, we define a subspace of $\R^n$ using an index set $\cS$: $\R_{\cS}^{n} = \{ \zz \in \R^{n} | \forall i \notin \cS, z_{i} = 0 \}$, i.e., it consists of all vectors in $\R^n$ whose entries are zero for all indices not in $\cS$.
Finally, we can define the support of a matrix-valued function $\mM(\xx): \cX \to \R^{m \times n}$ as the set of indices whose corresponding entries are nonzero for some input value $\xx$, i.e., $\text{Supp}(\mM) := \{ (i, j): \exists \xx \in \cX, \text{s.t.}, \, [\mM(\xx)]_{i, j} \neq 0 \}$.


\begin{wrapfigure}{R}{0.30\textwidth}
    \vspace{-4em}
    \begin{center}
        \includegraphics[width=.2\textwidth]{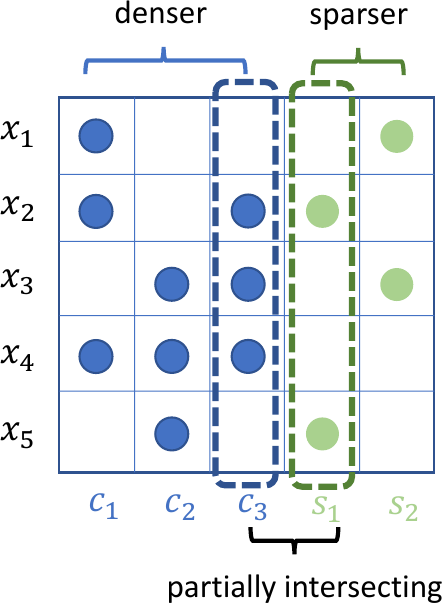}
        \vspace{-.5em}
        \caption{
        \footnotesize
        \textbf{
        \footnotesize
        Sparsity in $\mJ_{g}$}.
        }
        \label{fig:sparsity}
    \end{center}
    \vspace{-5em}
\end{wrapfigure}
 
\subsection{Influence Sparsity for Content Identification}
\label{sec:content_iden_theory}
We show that the subspace $\cc$ can be identified. 
That is, we can estimate a generative model $ (p_{\hat{c}}, p_{\hat{s}|\hat{c}}, \hat{g}) $ \footnote{
    As our theory is not contingent on the availability of multiple domains, we drop the domain index $\uu$ in our notations in this section for ease of exposition. 
} following the data-generating process in Equation~\ref{eq:generating_process} and the estimated variable $\hat{\cc}$ can capture all the information of $\cc$ without interference from $\ss$.
In the following, we denote the Jacobian matrices $\mJ_{g} (\zz)$'s and $\mJ_{\hat{g}}(\hat{\zz})$'s supports as $\cG$ and $\hat{\cG}$ respectively.
Further, we denote as $\cT$ a set of matrices with the same support as that of the matrix-valued function $\mJ_{g}^{-1} (\cc) \mJ_{\hat{g}} (\hat{\cc})$. 

\begin{restatable}[Content identification]{assumption}{cassumption} \label{assump:c_identifiability} {\ }
    \vspace{-.7em}
    \begin{enumerate}[labelindent=0pt,labelwidth=\widthof{\ref{last-item}},label=\roman*.,itemindent=-1em,itemsep=-0.2em]
        \item \label{assump:invertibility} $g$ is smooth and invertible and its inverse $g^{-1}$ is also smooth.
        \item \label{assump:span} For all $ i \in \{ 1, \dots, d_{x} \} $, there exist $ \{ \zz^{(\ell)}\}_{\ell=1}^{ \abs{ \cG_{i,:} } } $ and $\mT \in \cT$, such that $ \text{span}( \{ \mJ_{g}(\zz^{(\ell)})_{i,:}\}_{\ell=1}^{ \abs{ \cG_{i,:} } } ) = \R_{\cG_{i,:}}^{ d_{z} } $ and $ [\mJ_{g} (\zz^{(\ell)}) \mT]_{i,:} \in \R^{^{d_{z}}}_{\hat{\cG}_{i,:}}$.
        \item \label{assump:sparse_influence} For every pair $(c_{j_{c}}, s_{j_{s}})$ with $ j_{c} \in [d_{c}] $ and $ j_{s} \in \{d_{c}+1, \dots, d_{z}\} $, the influence of $s_{j_{s}} $ is sparser than that of $c_{j_{c}}$, i.e.,\ $ \norm{ \cG_{:,j_{c}} }_{0} > \norm{ \cG_{:,j_{s} }}_{0} $.
    \end{enumerate}
\end{restatable}

\begin{restatable}{theorem}{ctheorem} \label{thm:c_identifiability}
    We assume the data-generating process in Equation~\ref{eq:generating_process} with Assumption~\ref{assump:c_identifiability}.
    If for given dimensions $ (d_{c}, d_{s}) $, a generative model $( p_{\hat{c}}, p_{\hat{s} | \hat{c}}, \hat{g} )$ follows the same generating process and achieves the following objective:
    \begin{align} \label{eq:c_objecive}
        \small
        \begin{split}
             \argmin_{p_{\hat{c}}, p_{\hat{s}}, \hat{g}} \sum_{j_{\hat{s}} \in \{ d_{c}+1, \dots, d_{z} \}  }\norm{ \hat{\cG}_{:,j_{\hat{s}}} }_{0} \quad
             \text{subject to:} \; p_{\hat{\xx}} (\xx) = p_{ \xx } (\xx), \; \forall \xx \in \cX,
        \end{split}
    \end{align}
    then the estimated variable $\hat{\cc}$ is an one-to-one mapping of the true variable $\cc$. That is, there exists an invertible function $h_{c}(\cdot)$ such that $ \hat{\cc} = h_{c}(\cc) $.
\end{restatable}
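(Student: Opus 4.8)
The plan is to reduce the claim to a statement about the Jacobian of the indeterminacy map between the true and estimated latents, and then use the sparsity objective to kill the cross-block that would let estimated content absorb true style. First I would set up the indeterminacy map: since $p_{\hat\xx}=p_{\xx}$ on all of $\cX$ and both $g,\hat g$ are smooth invertible with smooth inverses (Assumption~\ref{assump:invertibility}), the composition $h := \hat g^{-1}\circ g:\cZ\to\cZ$ is a diffeomorphism with $\hat\zz = h(\zz)$. Differentiating $g=\hat g\circ h$ gives $\mJ_g(\zz)=\mJ_{\hat g}(\hat\zz)\,\mJ_h(\zz)$, so $\mJ_{\hat g}(\hat\zz)=\mJ_g(\zz)\mathbf{A}(\zz)$ where $\mathbf{A}(\zz):=\mJ_g^{-1}(\zz)\mJ_{\hat g}(\hat\zz)=\mJ_h(\zz)^{-1}$, and $\text{Supp}(\mathbf{A})$ is exactly the support captured by $\cT$. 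The target $\hat\cc=h_c(\cc)$ with $h_c$ invertible is then equivalent to showing the cross block $\partial\hat\cc/\partial\ss\equiv\mathbf 0$ together with nonsingularity of the content diagonal block of $\mJ_h$, so the entire argument becomes one about the block structure of $\mathbf{A}$ (equivalently $\mJ_h$).

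Second, I would translate the sparsity objective from $\hat\cG$ into the support of $\mathbf{A}$. Column-wise, $\mJ_{\hat g}(\hat\zz)_{:,m}=\sum_k \mJ_g(\zz)_{:,k}[\mathbf{A}(\zz)]_{k,m}$. The spanning condition (Assumption~\ref{assump:span}) provides, for each observed coordinate $i$, points $\zz^{(\ell)}$ whose Jacobian rows span $\R^{d_z}_{\cG_{i,:}}$; this rules out systematic cancellation across $\zz$ and forces the support of $\mJ_{\hat g}$ to be the exact ``composition'' of $\cG$ with $\text{Supp}(\mathbf{A})$, i.e.\ $\hat\cG_{:,m}=\bigcup_{k:\,[\mathbf{A}]_{k,m}\neq 0}\cG_{:,k}$. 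In particular, if an estimated style column $m=j_{\hat s}$ recruits any true content coordinate $j_c$ (that is, $[\mathbf{A}]_{j_c,j_{\hat s}}\neq 0$), then $\hat\cG_{:,j_{\hat s}}\supseteq\cG_{:,j_c}$ and hence $\norm{\hat\cG_{:,j_{\hat s}}}_0\ge\norm{\cG_{:,j_c}}_0$.

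Third --- and this is where the relative sparsity enters --- I would argue by contradiction that no estimated style column recruits a true content coordinate. The true model is feasible for Equation~\ref{eq:c_objecive}, so any minimizer obeys $\sum_{j_{\hat s}}\norm{\hat\cG_{:,j_{\hat s}}}_0\le\sum_{j_s}\norm{\cG_{:,j_s}}_0$. If some estimated style column recruited a content coordinate, its cost would already be at least $\norm{\cG_{:,j_c}}_0$, which by Assumption~\ref{assump:sparse_influence} strictly exceeds every true style column's cost. Using that $\mathbf{A}$ is invertible, I would then run a matching argument: the $d_s$ estimated style columns must jointly remain linearly independent of the content columns, which after excluding the cheaper content-free configurations forces the content-row / style-column block of $\mathbf{A}$ to vanish at the optimum, since any nonzero entry there would push the total strictly above the feasible value $\sum_{j_s}\norm{\cG_{:,j_s}}_0$. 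Hence $[\mathbf{A}]_{j_c,j_{\hat s}}=0$ for all content $j_c$ and estimated style $j_{\hat s}$, i.e.\ $\partial\cc/\partial\hat\ss\equiv\mathbf 0$.

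Finally, $\partial\cc/\partial\hat\ss\equiv\mathbf 0$ makes $\mathbf{A}=\mJ_h^{-1}$ block lower-triangular; since the inverse of a block-triangular matrix is block-triangular with the same vanishing block, $\mJ_h$ satisfies $\partial\hat\cc/\partial\ss\equiv\mathbf 0$, so $\hat\cc$ depends on $\cc$ alone, giving $\hat\cc=h_c(\cc)$. Invertibility of $h$ forces the content diagonal block of $\mJ_h$ to be nonsingular everywhere, so $h_c$ is invertible, which is the claim. I expect the main obstacle to be the third step: turning ``minimize the summed $\ell_0$ of the style columns'' into ``the cross block is exactly zero'' while simultaneously leaning on Assumption~\ref{assump:span} to forbid cancellations, on invertibility of $\mathbf{A}$ to guarantee each estimated style column still carries a genuine style direction, and on the strict inequality of Assumption~\ref{assump:sparse_influence} to make content recruitment provably suboptimal.
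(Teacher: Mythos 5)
Your proposal is correct and takes essentially the same route as the paper's proof: the same indeterminacy map $h=\hat{g}^{-1}\circ g$ and Jacobian factorization, the same support-composition step derived from Assumption~\ref{assump:c_identifiability}-\ref{assump:span} (the paper's Equation~\ref{eq:connection}), the same contradiction combining invertibility of $\mJ_{h}$ (which the paper instantiates as a permutation $\sigma$ via the Leibniz determinant formula, your ``matching argument'') with the strict inequality of Assumption~\ref{assump:c_identifiability}-\ref{assump:sparse_influence}, and the same final block-triangularity step to conclude $\hat{\cc}=h_{c}(\cc)$. The only deviations are cosmetic: you assert the support relation as an equality $\hat{\cG}_{:,j_{\hat{s}}}=\bigcup_{k\in\cT_{:,j_{\hat{s}}}}\cG_{:,k}$ where only the inclusion $\supseteq$ is derivable and needed, and your third step sketches what the paper spells out in the inequality chain of Equation~\ref{eq:contradiction}.
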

\vspace{-.5em}
A proof can be found in Appendix~\ref{appendix:c_proof}.

\paragraph{Interpretation.}
Theorem~\ref{thm:c_identifiability} states that by matching the marginal distribution $p_{\xx} (\xx)$ under a sparsity constraint of $\hat{\ss}$ subspace, we can successfully eliminate the influence of $\ss$ from the estimated $\hat{\cc}$.
This warrants that the content information can be fully retained without being entangled with the style information for a successful counterfactual generation.  
We can further identify $\tilde{\ss}$ from that of $\ss$ when the dependence $g_{s}$ function is invertible in its argument $\tilde{\ss}$~\citep{kong2022partial}.

\vspace{-.5em}

\paragraph{Discussion on assumptions.}
Assumption~\ref{assump:c_identifiability}-\ref{assump:invertibility} ensures that the information of all latent variables $[\cc, \ss]$ is preserved in the observed variables $\xx$, which is a necessary condition for latent-variable identification~\citep{hyvarinen2019nonlinear,kong2022partial}.
Assumption~\ref{assump:c_identifiability}-\ref{assump:span} ensures that the influence from the latent variable $\zz$ varies sufficiently over its domain.
This excludes degenerate cases where the Jacobian matrix is partially constant, and thus, its support fails to faithfully capture the influence between latent variables and the observed variables.
Assumption~\ref{assump:c_identifiability}-\ref{assump:sparse_influence} encodes the observation that the $\ss$ subspace exerts a relatively sparser influence on the observed data than the subspace $\cc$~(Fig~\ref{fig:sparsity}.).
This is reasonable for language, where the main event largely predominant the sentence and the stylistic variable play complementary and local information for particular attributes, e.g., tense, sentiment, and formality~\citep{Xu2019OnVL,wang2021controllable,ross2021tailor}. 
For the language data, $\xx$ corresponds to a piece of text (e.g., a sentence) with its dimension $d_{\xx}$ equal to the number of words multiplied by the word embedding dimension, i.e., multiple dimensions of $\xx$ correspond to a single word.
Therefore, even if a word is simultaneously influenced by both $\cc$ and $\ss$, the influence from the content $\cc$ tends to be denser on this word's embedding dimension, as content usually takes precedence in word selection over style.


\vspace{-.2em}

\paragraph{Contrast with prior work.}
\citet{zheng2022on} impose sparse influence constraints on the generating function $g$ in an absolute sense -- each latent component should have a very sparse influence on the observed data.
In contrast, Theorem~\ref{thm:c_identifiability} only calls for relative sparsity between two subspaces where each latent component's influence may not be sparse and unique as in \citet{zheng2022on}. 
We believe this is reasonable for many real-world applications like languages.
\citet{kong2022partial} assume the independence between the two subspaces and identify the content subspace by resorting to its invariance and sufficient variability of the style subspace over multiple domains.
However, as discussed in Section~\ref{sec:formulation}, the invariance of the content subspace is often violated, and so is the independence assumption.
In contrast, we permit the content subspace to vary over domains and allow for the dependence between the two subspaces.

\subsection{Partially Intersecting Influences for Style Identification}
\label{sec:style_iden_theory}
In this section, we show the identifiability for the style subspace $\ss$, under one additional condition: the influences from the two subspaces $\cc$ and $\ss$ do not fully overlap.

\begin{restatable}[Partially intersecting influence supports]{assumption}{sassumption}\label{assump:orthogonal_influences}
For every pair $ (c_{j_{c}}, s_{j_{s}}) $, the supports of their influences on $\xx$ do not fully intersect, i.e., $ \norm{\cG_{:, j_{c}} \cap \cG_{:, j_{s} } }_{0} < \min \{ \norm{\cG_{:, j_{c}}}_{0},  \norm{\cG_{:, j_{s}}}_{0} \} $.
\end{restatable}

\begin{restatable}{theorem}{stheorem} \label{thm:s_identifiability}
    We follow the data-generating process Equation~\ref{eq:generating_process} and Assumption~\ref{assump:c_identifiability} and Assumption~\ref{assump:orthogonal_influences}.
    We optimize the objective function in Equation~\ref{eq:c_objecive} together with
    \begin{align} \label{eq:objective_s}
    \small
    \min \sum_{(j_{\hat{c}}, j_{\hat{s}}) \in \{1, \dots, d_{c} \} \times \{d_{c} + 1, \dots, d_{z} \}} \norm{\hat{\cG}_{:, j_{\hat{c}}} \cap \hat{\cG}_{:, j_{\hat{s}}} }_{0}.
    \end{align}
    The estimated style variable $\hat{\ss}$ is a one-to-one mapping to the true variable $\ss$. That is, there exists an invertible mapping $h_{s} (\cdot)$ between $\ss$ and $\hat{\ss}$, i.e., $\hat{\ss} = h_{s} (\ss)$.
\end{restatable}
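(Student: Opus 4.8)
The plan is to reduce the claim to a statement about the Jacobian of the indeterminacy transformation and then use the new overlap objective together with Assumption~\ref{assump:orthogonal_influences} to kill the remaining cross term. Since both $g$ and $\hat g$ generate the same distribution and are invertible, define $h := \hat g^{-1}\circ g$ so that $\hat\zz = h(\zz)$ and $g = \hat g\circ h$. Writing $\mathbf J_h$ in the block form induced by the partition $\zz = [\cc,\ss]$,
\begin{align}
    \mathbf J_h = \begin{pmatrix} \mathbf A & \mathbf 0 \\ \mathbf B & \mathbf D \end{pmatrix},
\end{align}
Theorem~\ref{thm:c_identifiability} already gives that the top-right block $\partial\hat\cc/\partial\ss$ vanishes, so the content result is exactly the statement that $\mathbf J_h$ is block lower-triangular. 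The target $\hat\ss = h_s(\ss)$ is then equivalent to showing the remaining cross block $\mathbf B = \partial\hat\ss/\partial\cc$ is identically zero; invertibility of $h$ forces $\mathbf A$ and $\mathbf D$ to be invertible, so once $\mathbf B = \mathbf 0$ the map $\ss\mapsto\hat\ss$ is automatically a bijection $h_s$.

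Next I would translate everything to supports through the chain rule $\mJ_g = \mJ_{\hat g}\,\mathbf J_h$, equivalently $\mJ_{\hat g} = \mJ_g\,\mathbf J_h^{-1}$. Reading off columns, the estimated style columns are the style columns of $\mJ_g$ recombined by $\mathbf D^{-1}$, while the estimated content columns mix the content columns of $\mJ_g$ with its style columns through $-\mathbf D^{-1}\mathbf B\mathbf A^{-1}$. Invoking Assumption~\ref{assump:c_identifiability}-\ref{assump:span} to rule out cancellations (so that the support of a linear combination is the union of the supports), this yields that every estimated style support $\hat\cG_{:,j_{\hat s}}$ is a union of \emph{true style} supports $\cG_{:,j_s}$, whereas every estimated content support $\hat\cG_{:,j_{\hat c}}$ is a union of true content supports \emph{plus} the true style supports selected by the nonzero entries of $\mathbf D^{-1}\mathbf B\mathbf A^{-1}$. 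Thus $\mathbf B\neq\mathbf 0$ is exactly the event that some true style support leaks into an estimated content column.

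The core of the argument, which I would run by contradiction, is to show such leakage is penalized by the objective in Equation~\ref{eq:objective_s}, whose value I denote $O(\cdot)$. Suppose $\mathbf B\neq\mathbf 0$; then some $\cG_{:,j_s^\ast}$ is contained in an estimated content support $\hat\cG_{:,j_{\hat c}^\ast}$. Because $\hat\ss$ retains all the style information ($\mathbf D$ invertible), the very same $\cG_{:,j_s^\ast}$ also appears in some estimated style support $\hat\cG_{:,j_{\hat s}^\ast}$, so it lies \emph{entirely} inside $\hat\cG_{:,j_{\hat c}^\ast}\cap\hat\cG_{:,j_{\hat s}^\ast}$ and contributes a full $\norm{\cG_{:,j_s^\ast}}_0$ to the cost. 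I would then compare against the ground-truth model, which is feasible and attains $O(g)=\sum_{(j_c,j_s)}\norm{\cG_{:,j_c}\cap\cG_{:,j_s}}_0$. Here Assumption~\ref{assump:orthogonal_influences} is essential: it guarantees $\cG_{:,j_s}\not\subseteq\cG_{:,j_c}$, so without leakage the overlap between a content column and a style column is only the strictly partial set $\cG_{:,j_c}\cap\cG_{:,j_s}$. Replacing this partial overlap by the full support $\cG_{:,j_s^\ast}$ strictly increases the objective, so a model with $\mathbf B\neq\mathbf 0$ cannot be a minimizer. Optimality therefore forces $\mathbf B = \mathbf 0$, which gives $\hat\ss = h_s(\ss)$ with $h_s$ invertible.

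I expect the main obstacle to lie in the support bookkeeping rather than the conceptual comparison. Two points need genuine care: first, the ``support of a combination equals the union of supports'' identity must be justified off a measure-zero set using the spanning/variability condition of Assumption~\ref{assump:c_identifiability}-\ref{assump:span}, since a priori algebraic cancellations could shrink a support; and second, the counting argument that the leaked full style support is not already accounted for by the unavoidable content-style overlaps must handle the general (non-permutation) mixing in $\mathbf A$ and $\mathbf D$, i.e., establish a clean lower bound $O(\hat g)\ge \sum_{(j_c,j_s)}\norm{\cG_{:,j_c}\cap\cG_{:,j_s}}_0$ that becomes strict under leakage. Assumption~\ref{assump:orthogonal_influences} is precisely the hypothesis that makes this strictness hold, which is why style identification requires it on top of the conditions used for content.
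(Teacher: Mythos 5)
Your proposal is correct and takes essentially the same route as the paper's proof: both reduce the claim to showing that the style-to-content cross block of the indeterminacy map $h=\hat g^{-1}\circ g$ vanishes, and both derive a contradiction by observing that any leakage forces an estimated content column to contain a full true style support $\cG_{:,j_s^\ast}$, which (since the estimated style columns coincide with true style supports once the Theorem~\ref{thm:c_identifiability} objective is attained) makes the pairwise overlap equal $\norm{\cG_{:,j_s^\ast}}_{0}$, strictly exceeding the attainable lower bound $\sum_{(j_c,j_s)}\norm{\cG_{:,j_c}\cap\cG_{:,j_s}}_{0}$ by Assumption~\ref{assump:orthogonal_influences}. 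The only cosmetic difference is that the paper does the bookkeeping with the permutation $\sigma$ and support set $\cT$ inherited from the Leibniz-formula argument of Theorem~\ref{thm:c_identifiability}, rather than your explicit block decomposition of $\mathbf{J}_h$, and it addresses exactly the two technical points you flag (no-cancellation via Assumption~\ref{assump:c_identifiability}-\ref{assump:span}, and simultaneous attainability of both lower bounds at $\hat{\cG}=\cG$).
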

\vspace{-.5em}
The proof can be found in Appendix~\ref{appendix:s_proof}.\\
\paragraph{Interpretation.}
Theorem~\ref{thm:s_identifiability} states that we can recover the style subspace $\ss$ if the influences from the two subspaces do not interfere with each other (Fig~\ref{fig:sparsity}). 
This condition endows the subspaces distinguishing footprints and thus forbids the content information in $\cc$ from contaminating the estimated style variable $\hat{\ss}$.
The identification of $\ss$ is crucial to counterfactual generation tasks: if the estimated style variable $\hat{\ss}$ does capture all the true style variable $\ss$, intervening on $\hat{\ss}$ cannot fully alter the original style that is intended to be changed.

\vspace{-.5em}

\paragraph{Discussion on assumptions.}
Assumption~\ref{assump:orthogonal_influences} prescribes that each content component $c_{j_{c}}$ and each style component $s_{j_{s}}$ do not fully contain each other's influence support.
Together with Theorem~\ref{thm:c_identifiability}, this assumption is essential to the identification of $\ss$, without which $\hat{s}_{j_{s}}$ may absorb the influence from $c_{j_{c}}$.
Assumption~\ref{assump:orthogonal_influences} does not demand the supports of the entire subspaces $\cc$ and $\ss$ to be partially intersecting or even disjoint, and the latter directly implies Assumption~\ref{assump:orthogonal_influences}.
This assumption is plausible for many real-world data distributions, especially for unstructured data like languages and images -- certain dimensions in the pixels and word embeddings may reflect the information of either the content or the style.

\vspace{-.5em}

\paragraph{Contrast with prior work.}
\citet{kong2022partial} obtains the identifiability of the style subspace $\ss$ by exploiting the access to multiple domains over which the marginal distribution of $\ss$ (i.e., $p(\ss|\uu))$ varies substantially over domains $\uu$.
This hinges on the independence between the two subspaces and is not applicable when the marginal distribution of $\ss$ only varies over the content $\cc$, i.e., $p(\ss|\cc, \uu)$.

\vspace{-.5em}
\section{A Framework for Unsupervised Counterfactual Generation}
\vspace{-.5em}
\begin{SCfigure}[2]
    \vspace{35mm}
    \includegraphics[width=0.4\textwidth,trim={2 5 2 2},clip]{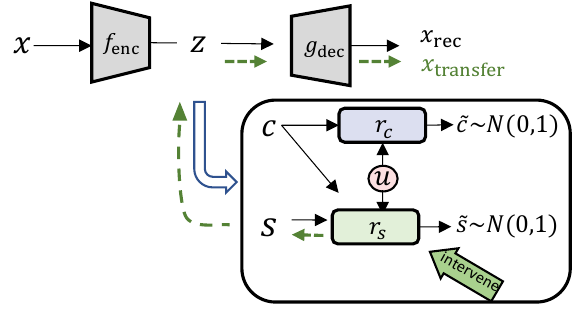}
    \caption{
    \footnotesize
    \textbf{Our VAE-based framework} -- \textbf{MATTE}. During training, the input $\xx$ is fed to the encoder $f_{\text{enc}}$ to derive the latent variable $\zz=[\cc,\ss]$, which is then passed to the decoder $g_{\text{dec}}$ for reconstruction. Flow modules, denoted as $r_{c}$ and $r_{s}$, are implemented to model the causal influences on $\cc$ and $\ss$ respectively, which yields the creation of exogenous variables $\tilde{\cc}$ and $\tilde{\ss}$. To generate transferred data $\xx_{\text{transfer}}$, we intervene on the style exogenous variable $\tilde{\ss}$ while keeping the original content variable $\cc$ unchanged (indicated by the green arrows).
    }
    \vspace{-10em}
    \label{fig:framework}
\end{SCfigure}

In this section, we translate the theoretical
insights outlined in Section~\ref{sec:theory} into an unsupervised counterfactual generation framework. Guided by the theory, we can approximate the underlying data-generating process depicted in Fig~\ref{fig:causal_graph} and recover the disentangled latent components.

In the following, we will describe each module in our VAE-based estimation framework (Fig~\ref{fig:framework}), the learning objective, and the procedure for counterfactual generation.

\subsection{VAE-based Estimation Framework} \label{sec:prior_net}
Given input sentences $\xx$ from various domains, we use the VAE encoder $f_{\text{enc}}$ to parameterize the posterior distribution $q_{f_{\text{enc}}}(\zz|\xx)$ and sample $\zz \sim q_{f_{\text{enc}}}(\zz|\xx)$. \footnote{
    For the sake of simplicity, in this section, we discuss estimated variables without the $\hat{\cdot}$ notation, as in \S~\ref{sec:theory}.
}
The posterior sample $\zz$ is then fed into the VAE decoder $g_{\text{dec}}$ for reconstruction $\xx_{\text{rec}} = g_{\text{dec}} (\zz)$, as in conventional VAE training.

We split $\zz$ into two components: $\cc$ and $\ss$. As shown in Fig~\ref{fig:causal_graph}, both $\cc$ and $\ss$ encompass information of a particular domain $\uu$, and $\ss$ is also influenced by $\cc$.
We parameterize such influences using flow-based models~\citep{DBLP:conf/aistats/DolatabadiEL20,durkan2019neural} $r_{c}$ and $r_{s}$, respectively:
\begin{align}
    \tilde{\cc} = r_{c} (\cc; \uu), \; \tilde{\ss} = r_{s} (\ss; \uu, \cc),
\end{align}
where $\tilde{\cc}$ and $\tilde{\ss}$ are exogenous variables that are independent of each other, and $\uu$ and $(\uu, \cc)$ act as contextual information for the flow models $r_{c} (\cdot; \uu)$ and $r_{s} (\cdot; \uu, \cc)$.
This design promotes parameter sharing across domains, as we only need to learn a domain embedding $\uu$ (c.f., a separate flow model per domain).
As part of the evidence-lower-bound (ELBO) objective in VAE, we regularize the distributions of $\tilde{\zz} = [\tilde{\cc}, \tilde{\ss}]$ to align with the prior $ p(\tilde{\zz}) $ using Kullback–Leibler (KL) divergence.
Consequently, the VAE learning objective can be expressed as:
\begin{align}
    \cL_{\text{VAE}} := - \log p_{ f_{\text{enc}}, g_{\text{dec}} } (\xx_{\text{rec}}) + \text{KL}( q_{f_{\text{enc}}, r_{c}, r_{s}}( \tilde{\zz} |\xx) | p(\tilde{\zz}) ), 
\end{align}
where the prior $ p (\tilde{\zz}) $ is set to a standard Gaussian distribution, $\cN(\0, \mI)$, consistent with typical VAE implementations.

\vspace{-.3em}

\subsection{Sparsity Regularization for Identification Guarantees}
Guided by the insights from Theorem~\ref{thm:c_identifiability} and Theorem~\ref{thm:s_identifiability},
the sparsity constraint on the influence of $\ss$ (i.e., Equation~\ref{eq:c_objecive}) and the partially intersecting influence constraint (i.e., Equation~\ref{eq:objective_s}) are crucial to faithfully recover and disentangle the latent representations $\cc$ and $\ss$.
\vspace{-.3em}

\paragraph{Sparsity of the style influence.}
To implement Equation~\ref{eq:c_objecive}, we compute the Jacobian matrix $\mJ_{g_{\text{dec}}} (\zz)$ for the decoder function on-the-fly and apply $\ell_{1}$ regularization to the columns corresponding to the style variable $ [\mJ_{ g_{\text{dec}} }(\zz)]_{:,d_{c}+1:d_{z}} $ to control its sparsity. That is, $\cL_{\text{sparsity}} = \norm{ [\mJ_{ g_{\text{dec}} }(\zz)]_{:,d_{c}+1:d_{z}}}_{1}$.
\vspace{-.3em}

\paragraph{Partially intersecting influences.}
To encourage sparsity in the intersection of influence between $\cc$ and $\ss$ (as defined in Equation~\ref{eq:objective_s}), we select $K$ output dimensions $\cI_{s}$ of $ \mJ_{g_{\text{dec}}} (\zz) $ that capture the most substantial influence from $ \ss $ and another set of $K$ output dimensions $\cI_{c}$ that receive the least influence from $\cc$.
Subsequently, we apply $\ell_{1}$ regularization to the influence from $\cc$ on the output dimensions at the intersection $ \cI_{s} \cap \cI_{c} $, i.e., $ \cL_{\text{partial}} = \norm{ [\mJ_{ g_{\text{dec}} }(\zz)]_{\cI_{s} \cap \cI_{c},1:d_{c}}}_{1} $.

\vspace{-.3em}

\paragraph{Content variable masking.}
In practice, the content dimensionality $d_{c}$ is a design choice.
When $d_{c}$ is set excessively large, the sparsity regularization term $\cL_{\text{sparsity}}$ may cause the style variable $\ss$ to lose its influence, squeezing the information of $\ss$ into the content variable $\cc$. 
To handle this issue, we apply a trainable soft mask that operates on $\cc$ to dynamically control its dimensionality.  


In sum, the overall training objective is as follows:
\begin{align} \label{eq:training_objective}
    \cL := \cL_{ \text{VAE} } + \lambda_{ \text{sparsity} } \cdot \cL_{ \text{sparsity} } +  \lambda_{ \text{partial} } \cdot \cL_{ \text{partial} } + \lambda_{ \text{c-mask} } \cdot \cL_{ \text{c-mask} },
\end{align}
where $ \lambda $'s are weight parameters to balance various loss terms.

\subsection{Style Intervention}
\label{sec:method_transfer}
\vspace{-.5em}

As discussed in Section~\ref{sec:formulation}, the content-style dependence should be preserved when generating counterfactual text to ensure linguistic consistency.
This can be achieved by intervening on the exogenous style variable $\tilde{\ss}$ of the original sample.
Specifically, we feed the original sample $\xx$ to the encoder $f_{\text{enc}}$ to obtain variables $[\cc, \ss]$. Subsequently, we pass the style variable $\ss$ through the flow models $r_{s}$ to obtain its exogenous counterpart $\tilde{\ss}$, i.e., $\tilde{\ss} = r_{s}(\ss; \cc, \uu)$.
To carry out style transfer, we set the original variable $\tilde{\ss}$ to the desired style value $ \tilde{\ss}_{\text{transfer}} $, which is the average of the exogenous style values of randomly selected samples with the desired style.
As the flow model $r_{s}$ is invertible, we can obtain the transferred style variable $ \ss_{\text{transfer}} = r_{s}^{-1} ( \tilde{\ss}_{\text{transfer}}; \cc, \uu ) $, which, together with the original content variable $\cc$, generates the new sample $ \xx_{\text{transfer}} = g_{\text{dec}}([ \cc, \ss_{\text{transfer}} ]) $.
This process is illustrated in Fig~\ref{fig:framework} using green arrows. We demonstrate the importance of preserving the content-style dependence and provide evidence that our approach can indeed fulfill this purpose (Fig~\ref{fig:nll_hist}).

\section{Experimental Results} \label{sec:experiments}
We validate our theoretical findings by conducting experiments on multiple-domain sentiment transfer tasks, which require effective disentanglement of factors, a concept at the core of our identifiability theory.
\begin{wraptable}[7]{R}{0.35\textwidth}
\vspace{-1em}
\caption{Dataset on four domains.}\label{tab:dataset}
\resizebox{0.35\textwidth}{!}{%
\begin{tabular}{c|c|c|c}
\hline
 Domains  & Train & Dev &Test \\
\hline
IMDB  & 344,175 &27,530&27,530 \\
Yelp   &  444,102 &63,484&1000 \\
Amazon & 554,998 &2,000&1,000\\
Yahoo & 4,000 & 4,000 &4,000\\
\hline
\end{tabular}
}
\end{wraptable} 

\paragraph{Datasets and Evaluation Schema.}
The proposed method is trained on four-domain datasets (Tab~\ref{tab:dataset}), i.e, movie (Imdb)~\citep{diao2014jointly}, restaurant (Yelp)~\citep{DBLP:conf/naacl/LiJHL18}, e-commerce (Amazon)~\citep{DBLP:conf/naacl/LiJHL18} and news (Yahoo)~\citep{zhang2015character,DBLP:conf/emnlp/LiZGCBDS19}.~\footnote{Dataset and detailed experiment configurations can be found in Appendix,~\ref{app:inplement}.} From common practice~\citep{yang2018unsupervised,lample2019multiple}, we evaluate the generated sentences in terms of the four automatic metrics: 
(1) \textbf{Accuracy}. We train a CNN classifier on the original style-labelled dataset, which has over 95.0\% accuracy when evaluated on the four separate validation datasets. Subsequently, we employ it to evaluate the transformed sentences, gauging how effectively they convey the intended attributes. (2) \textbf{BLEU}~\citep{DBLP:conf/acl/PapineniRWZ02}. It compares the n-grams in the generated text with those in the original text, measuring how well the original content is retained
~\footnote{\draft{We also adopt CTC score~\citep{deng-etal-2021-compression}, to mitigate potential issues brought by the word-overlap measurements in BLEU, as it considers the matching embeddings. The evaluation results are shown in Table~\ref{tab:ctc_result}.}}. 
(3) \textbf{G-score}.  
It represents the geometric mean of the predicted probability for the ground-truth style category and the BLEU score. Due to its comprehensive nature, it is our primary metric. (4) \textbf{Fluency}. It is the perplexity score of GPT-2~\citep{radford2019language} -- lower perplexity values indicate a higher levels of fluency. For \textbf{human evaluation}, we invited three evaluators proficient in English to rate the sentiment reverse, semantic preservation, fluency and overall transfer quality using a 5-point Likert scale, where higher scores signify better performance. Furthermore, they were asked to rank the generated sentences produced from different models, with the option to include tied items in their ranking.

\subsection{Sentiment transfer}
\paragraph{Baselines.}
We compare our model with the state-of-the-art text transfer models that do not rely on style labels, along with a supervised model, \texttt{B-GST}~\citep{DBLP:conf/emnlp/SudhakarUM19}, which is based on GPT2~\citep{radford2019language} and accomplishes style transfer through a combination of deletion, retrieval, and generation.
The other VAE-based baselines can be divided into two groups based on their architecture: those with LSTM backbones and those utilizing pretrained language models (PLM).
Within the LSTM group, \texttt{$\beta$-VAE}~\citep{higgins2017betavae} encourages disentanglement by progressively increasing the latent code capacity. \texttt{JointTrain}~\citep{DBLP:conf/naacl/LiLXL22} uses the GloVe embedding to initialize $\ss$ and learns $\cc$ through LSTM. \texttt{CPVAE}~\citep{DBLP:conf/icml/XuCC20} is the state-of-the-art unsupervised style transfer model, which maps the style variable to a $k$-dimensional probability simplex to model different style categories. 
In the PLM group, we use GPT2~\citep{radford2019language} as the backbone and introduce an additional variational layer after fine-tuning its embedding layer to generate the latent variable $\zz$, referred to as \texttt{GPT2-FT}. Also, we consider \texttt{Optimus} \citep{li-etal-2020-optimus}, which is one of the most widely-used pretrained VAE models, utilizing BERT~\citep{devlin-etal-2019-bert} as the encoder and GPT2 as the decoder. 

\renewcommand{\arraystretch}{1.3}
\begin{table*}[t]
\centering
\resizebox{0.99\textwidth}{!}{%
\begin{tabular}{llcccc | cccc}
\hline
&& \multicolumn{4}{ c |}{\textsl{IMDB}} & \multicolumn{4}{ c}{\textsl{Yelp}}\\
\hline
&Model &  Acc($\uparrow$) & BLEU($\uparrow$)  &\textbf{G-score}($\uparrow$) & PPL($\downarrow$)   & Acc($\uparrow$) & BLEU($\uparrow$)  &\textbf{G-score}($\uparrow$) & PPL($\downarrow$)   \\
\hline
&\texttt{B-GST}~\citep{DBLP:conf/emnlp/SudhakarUM19}&36.20 $\pm0.80$&50.45 $\pm2.62$&32.09 $\pm1.81$&48.58 $\pm2.08$&82.00 $\pm0.20$ &32.06 $\pm1.34$&35.43 $\pm0.92$ &50.45 $\pm2.38$ \\
\hline
\multirow{3}{*}{\rotatebox{90}{LSTM}}&\texttt{$\beta$-VAE}~\citep{higgins2017betavae}  &\textbf{38.27}  $\pm1.03$& 11.37 $\pm3.03$  & 9.05 $\pm1.19$   & \textbf{43.59} $\pm3.07$
&\textbf{40.30} $\pm0.92$& 7.58  $\pm2.73$  & 6.86 $\pm1.08$ & 59.34 $\pm3.81$ \\
&\texttt{JointTrain}~\citep{DBLP:conf/naacl/LiLXL22} &24.13 $\pm0.52$&23.26  $\pm2.85$&12.28$\pm1.91$ &70.11 $\pm2.76$&14.20 $\pm0.62$&31.72 $\pm1.91$ &12.74 $\pm0.95$&84.07 $\pm2.37$\\
&\texttt{CPVAE}~\citep{DBLP:conf/icml/XuCC20} &20.15 $\pm0.40$ &\textbf{49.82} $\pm1.25$&\textbf{20.01}  $\pm0.96$&70.18 $\pm2.78$
&14.50 $\pm0.30$&51.47 $\pm1.81$&16.84 $\pm0.92$&72.81 $\pm2.26$\\
\hline
\multirow{2}{*}{\rotatebox{90}{PLM}}&
\texttt{GPT2-FT}~\citep{radford2019language}&15.20 $\pm0.25$ &28.93 $\pm3.16$&12.19 $\pm2.84$&71.08 $\pm2.19$&12.00 $\pm0.42$&39.62$\pm1.92$&14.49 $\pm1.35$ &78.37 $\pm2.14$\\
&\texttt{Optimus}~\citep{li-etal-2020-optimus} &14.07 $\pm0.20$&\textbf{59.04} $\pm1.68$&17.47 $\pm1.21$&61.90 $\pm2.61$
&13.60 $\pm0.30$&\textbf{69.82} $\pm1.92$&\textbf{21.24} $\pm1.83$&\textbf{52.56} $\pm2.01$\\
\hline
&\texttt{MATTE}  &\textbf{32.43} $\pm0.28$&45.10 $\pm2.91$&\textbf{25.92} $\pm1.62$&\textbf{50.08} $\pm2.02$&\textbf{34.30} $\pm0.26$&\textbf{50.14} $\pm2.51$&\textbf{26.34} $\pm1.37$&\textbf{51.51} $\pm2.09$\\
\hline
{}&{} & \multicolumn{4}{c|}{\textsl{Amazon}} & \multicolumn{4}{c}{\textsl{Yahoo}}\\\midrule
&Model & Acc($\uparrow$) & BLEU($\uparrow$)  &\textbf{G-score}($\uparrow$) & PPL($\downarrow$)   & Acc($\uparrow$) & BLEU($\uparrow$)  &\textbf{G-score}($\uparrow$) & PPL($\downarrow$)    \\

\midrule
&\texttt{B-GST}~\citep{DBLP:conf/emnlp/SudhakarUM19}&60.45 $\pm0.65$ &56.02 $\pm2.36$ &47.67 $\pm1.68$&49.01 $\pm3.18$&
84.30 $\pm0.40$&40.39 $\pm2.81$ &38.65 $\pm1.64$&58.20 $\pm2.19$ \\
\hline
\multirow{3}{*}{\rotatebox{90}{LSTM}}&
\texttt{$\beta$-VAE}~\cite{higgins2017betavae}   &\textbf{50.08}$\pm0.68$ & 8.04 $\pm2.62$ & 9.39 $\pm1.24$    & \textbf{33.09} $\pm2.53$
&\textbf{55.47} $\pm0.40$ & 3.77 $\pm1.32$   & 5.85 $\pm1.71$    & \textbf{52.17} $\pm3.06$\\
&\texttt{JointTrain}~\cite{DBLP:conf/naacl/LiLXL22} &\textbf{32.90} $\pm0.42$ &23.21 $\pm2.16$ &18.33 $\pm1.07$ &84.63$\pm2.76$
&35.33 $\pm0.28$ &14.04 $\pm1.72$ &11.62$\pm0.92$ &67.34 $\pm2.84$\\
&\texttt{CPVAE}~\cite{DBLP:conf/icml/XuCC20} &32.60 $\pm0.20$ &41.08 $\pm1.28$&\textbf{30.08} $\pm1.15$ &77.61 $\pm3.12$
&\textbf{43.92} $\pm0.30$ &25.44 $\pm1.37$ &20.28 $\pm0.95$ &76.28 $\pm2.67$\\
\hline
\multirow{2}{*}{\rotatebox{90}{PLM}}&\texttt{GPT2-FT}~\citep{radford2019language}&30.46 $\pm{0.30}$&40.34$\pm{2.82}$&	26.72$\pm{1.93}$ &79.36$\pm{2.63}$&17.90 $\pm{0.40}$&\textbf{44.19} $\pm{1.86}$&15.99 $\pm{1.11}$&70.99 $\pm{2.37}$\\
&\texttt{Optimus}~\citep{li-etal-2020-optimus} &24.80 $\pm{0.20}$&\textbf{62.50} $\pm{1.55}$&28.53 $\pm{1.22}$&74.66 $\pm{3.10}$&27.10 $\pm{0.15}$&32.73 $\pm{1.82}$&19.17 $\pm{1.69}$&73.18 $\pm{2.76}$\\
\hline
&\texttt{MATTE} &\textbf{34.50 } $\pm{0.24}$&\textbf{52.25} $\pm{1.48}$&\textbf{35.73} $\pm{1.14}$&\textbf{63.37 $\pm{2.22}$}&38.45 $\pm{0.20}$&\textbf{42.40} $\pm{1.35}$&\textbf{29.01} $\pm{2.30}$&\textbf{56.12 $\pm{2.57}$}\\
\hline
\end{tabular}
}
\caption{
\footnotesize
Comparison with unsupervised methods across four domain datasets with supervised \texttt{B-GST} as an upper bound reference. The top and the second-best results are in bold.}
\vspace{-1.5em}
\label{tab:unsup_results_combine}
\end{table*}

\paragraph{Quantitative performance.} 
Among LSTM baselines in Table~\ref{tab:unsup_results_combine}, \textit{$\beta$-\texttt{VAE} shows high sentiment transfer accuracy and fluency but poor content preservation.} 
We observed that many generated sentences follow simple but repetitive patterns, e.g., 2.2\% transferred sentences in Yelp containing the phrase “I highly recommend” while only 0.6\% original sentences do. They are fluent and correctly sentiment flipped but their semantics are significantly different from the original sentences, indicating a generation degradation problem~\footnote{The \textsl{diversity-n}~\citep{li-etal-2016-diversity} also indicates repetitious pattern and the evaluation results are in Table~\ref{app:diversity_tab}.}.
\textit{\texttt{CPVAE} achieves an overall better G-Score than all the other baselines across three domains (except for Yelp).}
Compared with the other LTSM-based methods, its superiority in content preservation is pronounced. \textit{PLMs models achieve overall better BLEU scores compared with the LSTM group.} \texttt{Optimus} outperforms \texttt{GPT2-FT}, which can be partly explained by the fact that the variational layer in \texttt{Optimus} has been pretrained on 1,990K Wikipedia sentences. Our model is built on top of \texttt{CPVAE} with the proposed causal influence modules and sparsity regularisations. It gains consistent improvements in G-score and fluency across all datasets over all the other unsupervised methods. Compared with the supervised method, despite a relatively large gap in accuracy due to the lack of supervision, our approach achieves comparable BLEU scores. The human evaluation results in Table~\ref{tab:human_eval} show that \textit{human annotators favour \texttt{Optimus} in terms of content preservation and fluency, but \texttt{MATTE} ranks the best-performing method with 58.5\% support set, compared to 41.00\% for \texttt{Optimus}.}

\begin{table}[h]
\vspace{-0.5em}
\begin{minipage}[p]{0.38\textwidth} 
\vspace{-0.5em}
\centering 
\resizebox{\linewidth}{!}{%
\renewcommand{\arraystretch}{1.3}
\begin{tabular}{p{1cm}p{0.5cm}p{0.9cm}p{1cm}p{1cm}}
\toprule[1pt]
 & Style& Content & Fluency& Best rank(\%) \\
 \hline
\texttt{CPVAE} & 1.30 & 2.78 & 3.20 & 21.50 \\
\texttt{Optimus} & 1.41 & \textbf{3.79} & \textbf{4.12} & 41.00 \\
\texttt{MATTE} & \textbf{1.99} & 2.91 & 3.53 & \textbf{58.50} \\
\bottomrule[1pt]
\end{tabular}
}
\vspace{1.55mm}
\caption{\footnotesize{Human evaluation results from three annotators. The Cohen’s Kappa coefficient among every two annotators over the Best-rank is 0.46. Human annotators favour \texttt{Optimus} in terms of content preservation and fluency, but \texttt{MATTE} ranks the best-performing method with more than 58\% support set after considering the style reverse success rate.}}
\label{tab:human_eval} 
\end{minipage}
\hfill
\begin{minipage}[p]{0.59\textwidth}
\centering
\resizebox{\linewidth}{!}{%
\begin{tabular}{ll}
\toprule[1pt]
\multicolumn{2}{l}{\texttt{Src1:} \textbf{This guy is an awful actor. }.  \XSolidBrush} \\
\midrule
\texttt{CPVAE}& The guy is very \senc{\textbf{flavorful}}.  \Checkmark\\
\texttt{Optimus} & The guy is an \senc{\textbf{amazing}} actor. \Checkmark \\
\texttt{MATTE}& This guy is an \senc{\textbf{amazing}} actor!  \Checkmark\\
\midrule
\multicolumn{2}{l}{\texttt{Src2:} \textbf{I had it a long time now and I still love it. }\Checkmark}\\
\midrule
\texttt{CPVAE} & I had it a long time before and I've \senc{\textbf{never eaten}} it. \XSolidBrush \\
\texttt{Optimus}& \senc{\textbf{It is}} a long time now and I \senc{\textbf{always get this food}}. \Checkmark\\
\texttt{MATTE}& $$I had it a long time now and I \senc{\textbf{never played}} it$$. \XSolidBrush\\
\midrule
\multicolumn{2}{l}{\texttt{Src3} \textbf{These come in handy with those tender special moments}. \Checkmark} \\
\midrule
\texttt{CPVAE}& These come in handy with those \senc{\textbf{sexy care employees}}.  \Checkmark \\
\texttt{Optimus}& These \senc{\textbf{are fateless in their only safe place}}. \XSolidBrush\\
\texttt{MATTE}& These come in handy with those \senc{\textbf{poorly executed characteristics}}.  \XSolidBrush\\
\bottomrule[1pt]
\end{tabular}
}
\caption{\footnotesize{Generated style-transferred Sentences. \Checkmark, \XSolidBrush represent sentiment polarity.}}
\label{tab:case_study}
\end{minipage}
\vspace{-1em}
\end{table}

\paragraph{Qualitative results.}
We randomly selected three sentences from the test sets and analyzed the results generated by the top-performing baselines in Table \ref{tab:case_study}. 
For \texttt{Src 1}, although all the methods successfully transfer the original sentiment of the sentence, \texttt{CPVAE} generates the word `\textit{flavourful}' for the content \textit{guy}, resulting in an unnatural sentence. \textit{This issue arises because \texttt{CPVAE} fails to identify the domain-specific content-style dependency, i.e., the Src domain is in \textsl{IMDB}, while the transferred sentence incorrectly uses the style word `flavourful' which is commonly used in \textsl{Yelp}}. While \texttt{Optimus} can generate relatively fluent sentences partly due to its powerful decoder, it hardly maintains the original semantics (\texttt{Src 2, 3}), indicating a lack of effective disentanglement between $\cc$ and $\ss$.
These failure modes demonstrate the importance of a proper disentanglement of content and style and modelling the causal influence between the two across domains.
Benefiting from theoretical insights, our approach manages to reflect the content influence across different domains in \texttt{Src 1} and retain the content information in \texttt{Src2, 3}.

\begin{wrapfigure}[22]{L}{0.52\textwidth}
\vspace{-1.em}
\includegraphics[width=0.52\textwidth,trim={0 0 0 8},clip]{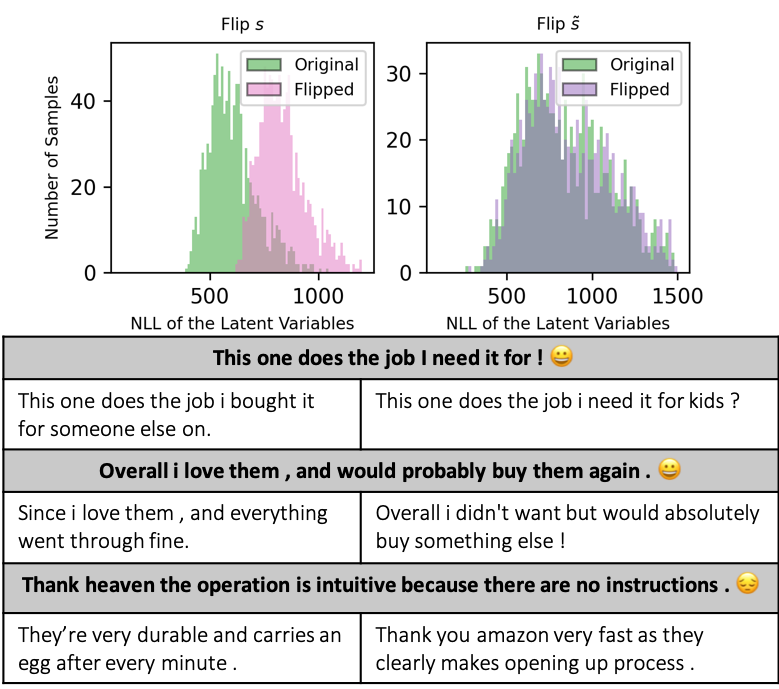}
\caption{
\footnotesize
Histograms of negative log-likelihood (NLL) of 1000 Amazon test samples evaluated on the original latent variable and intervened ones. \texttt{left}: flips $\ss$, \texttt{right} flips $\tilde{\ss}$. The table shows the corresponding sentences.}
\label{fig:nll_hist}
\end{wrapfigure}

\subsection{Ablation Study}
\label{sec:ablation}
Ablation studies in Table~\ref{tab:ablation} are used to verify our theoretical results in \S~\ref{sec:theory}~\footnote{The results on \textsl{Amazon} and \textsl{Yahoo} are in Appendix, Table~\ref{app:ablation_tab}.}. On top of the backbone, \texttt{CPVAE}, we incrementally add each component of our method: (1)\texttt{Indep} considers the domain influence on $\cc$ (i.e., the $r_{c}$ module in Fig~\ref{fig:framework}), while neglecting the independence between $\cc$ and $\ss$. 
It experiences a large accuracy boost in conjunction with a significant degradation in BLEU, suggesting poor retention of the content information.
(2) \texttt{CausalDep} takes into account the dependency between content and style by incorporating the module $r_{s}$ in Fig~\ref{fig:framework}. 
This ameliorates the content retention problem and strikes an overall better balance, as reflected by the raised BLEU score and G-score, although causal dependence in \texttt{CausalDep} is not sufficient for identification without proper regularization.
(3) After introducing the style sparsity regularization $\cL_{\text{sparsity}}$ as specified in Theorem~\ref{thm:c_identifiability}, we observe
a significant increase of BLEU over \texttt{CausalDep}, verifying Theorem~\ref{thm:c_identifiability} that the style influence sparsity facilitates \textsl{content identification}~\S\ref{sec:content_iden_theory}. 
(4) We further introduce \texttt{$\mathcal{L}_\text{partial}$} inspired by Theorem~\ref{thm:s_identifiability}, which controls the intersection of content and style influence supports. 
This improvement in style identification, i.e., the recovery of accuracy over $\mathcal{L}_\text{sparsity}$ corroborates Theorem~\ref{thm:s_identifiability}.
(5) The incorporation of \texttt{$\mathcal{L}_\text{c-mask}$} arrive at our full model, which further improves the style identification, consistent with our motivation in \S~\ref{sec:method_transfer}.
It also exhibits the best G-score across all the datasets, with the most predominant improvement over \texttt{CausalDep} on the Yahoo dataset, where the G-score increases from 21.39\% to 29.01 \%.
\begin{table*}[t]
\centering
\resizebox{0.9\textwidth}{!}{%
\vspace{-1.em}
\begin{tabular}{lllll|llll}
\hline
{} & \multicolumn{4}{c|}{IMDB} & \multicolumn{4}{c}{Yelp}\\\midrule
Model &  Acc($\uparrow$) & BLEU($\uparrow$)  &G-score($\uparrow$) & PPL($\downarrow$)   & Acc($\uparrow$) & BLEU($\uparrow$)  &G-score($\uparrow$) & PPL($\downarrow$)   \\
\hline
\texttt{Backbone} &20.15 &49.82 &20.01 &70.18
&14.50&51.47 &16.84 &72.81\\
\texttt{Indep}~\citep{kong2022partial}& \textbf{45.00}$^{\blacktriangle}$&30.88&19.89&61.85$^{\blacktriangle}$&\textbf{61.90}$^{\blacktriangle}$&25.67&21.24$^{\blacktriangle}$&73.78	\\
\texttt{CausalDep}&28.71$^{\blacktriangle}$&39.63&21.85$^{\blacktriangle
}$&53.25$^{\blacktriangle
}$&
22.10$^{\blacktriangle}$&48.98&25.98$^{\blacktriangle}$&55.14$^{\blacktriangle}$\\
:w. / \texttt{$\mathcal{L}_\text{sparsity}$}&21.55&\textbf{56.59}$^{\triangle
}$&20.90&65.26&13.20&\textbf{56.26}$^{\triangle}$&14.59&54.10$^{\triangle}$\\
:w. / \texttt{$\mathcal{L}_\text{partial}$} &30.18$^{\triangle
}$&51.95&25.57$^{\triangle
}$&54.66&33.70$^{\triangle
}$&49.09&25.81$^{\triangle}$&52.87$^{\triangle
}$\\
:w. / \texttt{$\mathcal{L}_\text{c-mask}$(Full)}&\textbf{32.43$^{\triangle}$}&45.10&\textbf{25.92}$^{\triangle}$&\textbf{50.08}$^{\triangle}$&34.30$^{\triangle}$&50.14&\textbf{26.34}$^{\triangle}$&\textbf{51.51}$^{\triangle}$\\
\bottomrule
\end{tabular}
}
\caption{
\footnotesize{
Ablation results on sentiment transfer on two domains. \texttt{CausalDep} incorporates style flow $r_s$ to model dependency of $\cc$ on $\ss$, while \texttt{Indep} assumes the independence between the two variables. 
$^{\blacktriangle
}$ marks the improvements over\texttt{Backbone}, while $^{\triangle
}$ over the \texttt{CausalDep}.}}
\vspace{-2em}
\label{tab:ablation}
\end{table*}

\paragraph{The importance of content-style dependence.}
We demonstrate the importance of content-style dependence by visualizing the changes in negative log-likelihood (NLL) induced by different ways of style intervention, namely flipping $\tilde{\ss}$ as in our method and flipping $\ss$ which breach the content-style dependence. If the NLL increases after the style transfer, it indicates that the new variables are located in a lower density region~\citep{zheng2022on,DBLP:conf/icml/XuCC20}.
Fig~\ref{fig:nll_hist} shows the histograms of NLLs for all the Amazon test samples, both before and after a style transfer. 
We can see that the NLL distribution changes negligibly when we flip $\tilde{\ss}$, in contrast with the significant change caused by flipping $\ss$. 
This implies that flipping $\tilde{\ss}$ enables better preservation of the joint distribution of the original sentence. 
The generated sentences resulting from flipping $\tilde{\ss}$ exhibit a higher level of semantic fidelity to the original sentence, with a clear inverse sentiment.

\subsection{Comparison with large language model} 
\vspace{-0.5em}
As widely recognized, large language models (LLMs) have demonstrated an impressive capability in text generation. However, we consider the principles of counterfactual generation to be complementary to the development of LLMs. We aim to leverage our theoretical insights to further enhance the capabilities of LLMs. We provide examples in Table~\ref{tab:chatgpt} where LLMs struggle with sentiment transfer, primarily due to their tendency to  overlook the broader and implicit sentiments while accurately altering invidivual sentiment words. Consequently, it is reasonable to anticipate that LLMs could benefit from the principles of representation learning, as developed in our work.

\begin{table}[h]
\centering
\vspace{-1.em}
\resizebox{0.88\textwidth}{!}{%
\begin{tabular}{l}
\toprule[1pt]
\textit{Src:} The buttons to extend the arms worked exactly one time before breaking. \\
\textit{ChatGPT-p1:}  The buttons to extend the arms \senc{\textbf{failed to}} work \senc{\textbf{from the beginning, never functioning even once}}.\\
\textit{ChatGPT-p2:} The buttons to extend the arms \senc{\textbf{never}} worked, even once, and \senc{\textbf{remained functional until they broke}}.\\
\textit{Our:}  The buttons to extend the arms worked exactly \senc{\textbf{as described}}. \\
\hline
\textit{Src:}  I love that it uses natural ingredients but it was ineffective on my skin. \\
\textit{ChatGPT-p1:} I \senc{\textbf{dislike}} that it uses natural ingredients, but it was \senc{\textbf{highly effective}} on my skin.\\
\textit{ChatGPT-p2:} I \senc{\textbf{dislike}} that it uses natural ingredients, but it was \senc{\textbf{highly effective}} on my skin. \\
\textit{Our:} I like that it uses natural ingredients, and it was \senc{\textbf{also good}}. \\
\hline
\textit{Src:} This case is cute however this is the only good thing about it. \\
\textit{ChatGPT-p1:} This case is \senc{\textbf{not}} cute; however, it is the only good thing about it. \\
\textit{ChatGPT-p2:} This case is \senc{\textbf{not}} cute at all; however, it is the only \senc{\textbf{bad}} thing about it.  \\
\textit{Ours:}{This case is cute and \senc{\textbf{overall a valuable product}}.}\\
\bottomrule[1pt]
\end{tabular}
}
\caption{\footnotesize{A Sentiment transfer example, on which ChatGPT fails to completely reverse the overall sentiment of the sentence, although it successfully negates individual words within text. In contrast, our method achieves the sentiment reversal with minimal changes. \textit{ChatGPT-p1} and \textit{ChatGPT-p2} represent results obtained from two different prompts, i.e., p1: \textit{``Flip the sentiment of the following sentences, but keep the content unchanged as much as possible.''}; p2: \textit{``Please invert the sentiment while preserving content as much as possible in the following sentence that originates from the original domain.''}.
}}
\label{tab:chatgpt}
\end{table}

\vspace{-2mm}
 \begin{wrapfigure}[10]{R}{0.35\textwidth}
 \vspace{-1.5em}
    \centering
    \includegraphics[width=0.35\textwidth]{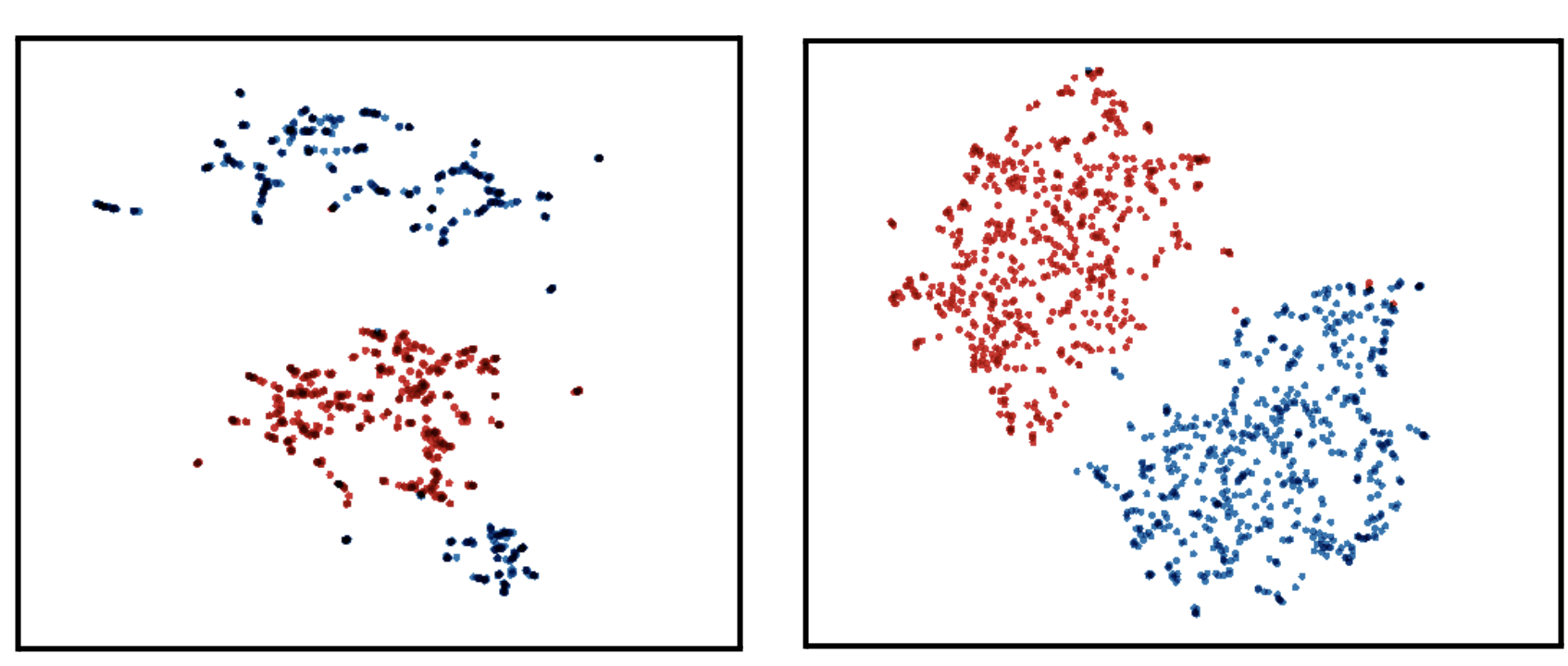}
    \vspace{-1.5em}
    \caption{\footnotesize{The style variables of sentences in past-tense (blue) and present-tense (red) following a UMAP projection. Left: \texttt{CPVAE}; Right:  \texttt{MATTE}.}}
    \label{fig:enter-label}
\end{wrapfigure}

\vspace{-1.5em}
\subsection{Visualization of style variable}
\vspace{-0.5em}
We further validate our theoretical insights within additional content-style disentanglement scenarios. As tense has a relatively sparse influence on sentences compared to their content, we choose tense (past and present) as another style for illustration. Specifically, we collect 1000 sentences in either past or present tense from the Yelp Dev set and derive their style representations, denoted as $\ss$, by feeding these sentences into our well-trained model. The projection results of \texttt{CPVAE} and \texttt{MATTE} are shown in \ref{fig:enter-label}. The distinct separation between the red and blue data points indicates a more discriminative and better disentangled style variabl. However, in the case of \texttt{CPVAE}, some blue data points are mixed within the lower portion of the red region. 
\vspace{-1.em}




\section{Conclusion and limitations}
\vspace{-0.5em}
Prior work~\citep{kong2022partial,xie2023multidomain} have employed multiple domains to achieve unsupervised representation disentanglement. However, the assumed independence between the content and style variables often does not hold in real-world data distributions, particularly in natural languages. To tackle this challenge, we 
address the identification problem in latent-variable models  
by leveraging the sparsity structure in the data-generating process. This approach provides identifiability guarantees for both the content and the style variables.
We have implemented a controllable text generation method based on these theoretical guarantees.
Our method outperforms existing methods on various large-scale style transfer benchmark datasets, thus validating our theory.
It is important to note that while our method shows promising empirical results for natural languages, the sparsity assumption (Assumption~\ref{assump:c_identifiability}-\ref{assump:sparse_influence}) may not hold for certain data distributions like images, where the style component could exert dense influences on pixel values.
In such cases, we may explore other forms of inherent sparsity in the given distribution, e.g., sparse dependencies between content and style or sparse changes over multiple domains, to achieve identifiability guarantees and develop empirical approaches accordingly.  

\newpage
\section*{Acknowledgements}
We thank anonymous reviewers for their constructive feedback.
This work was funded by the UK Engineering and Physical Sciences Research Council (grant no. EP/T017112/1, EP/T017112/2, EP/V048597/1, EP/X019063/1). YH is supported by a Turing AI Fellowship funded by the UK Research and Innovation (grant no. EP/V020579/1, EP/V020579/2).
The work of LK and YC is supported in part by NSF under the grants CCF-1901199 and DMS-2134080.
This project is also partially supported by NSF Grant 2229881, the National Institutes of Health (NIH) under Contract R01HL159805, a grant from Apple Inc., a grant from KDDI Research Inc., and generous gifts from Salesforce Inc., Microsoft Research, and Amazon Research.

\bibliography{neurips23}
\bibliographystyle{abbrvnat}

\newpage

\appendix
\section{Appendix}

\subsection{Implementation Details}
\label{app:inplement}
In this section, we first introduce the dataset. We then provide the network architecture details of \textbf{MATTE}. The hyperparameter selection criteria and the training details are summarized.

\subsubsection{Dataset}
\label{app:exp_setup}
We use four domain datasets to train our unsupervised model, i.e., Imdb, Yelp, Amazon and Yahoo, and follow the data split provided by \citet{DBLP:conf/emnlp/LiZGCBDS19}. The datasets can be downloaded via \href{https://github.com/cookielee77/DAST}{https://github.com/cookielee77/DAST}. The dataset details can be found in Table~\ref{tab:dataset}.  We set the sequence length $L$ as 25, which is the 90 percentile of the sentence length of the training dataset. Therefore, shorter sentences are padded and longer sentences are clipped. The vocabulary size is set to 10000.
\draft{For the sentiment transfer, we collect 100 positive sentences in dev set based on their sentiment labels to derive $s_\text{transfer}$ to flip the sentiment of the negative sentences in the test set, and vice versa. For the tense transfer, we use stanfordnlp tool~\footnote{\url{https://stanfordnlp.github.io/stanfordnlp/}} to identify the tense for the main verb, then collect 100 sentences of present tense in dev set to derive $s_\text{transfer}$ to flip the past-tense sentences.}


\subsubsection{Model Architecture}
We summarize our network architecture below and describe it in detail in Table~\ref{tab:model_arc}.\\
\begin{itemize}[leftmargin=*]
\item[]\textbf{Encoder}: According to ~\citet{DBLP:conf/icml/XuCC20}, the encoder is fed with a text span $\xx[t_1:t_1+m]$ extracted from the original sentence $\xx$, where $t_1$ is a random word position index, $m$ is set to 10 if $t_1+m$ is smaller than $L$. $\text{H}_\text{word}$ is the word embedding dimension, set to 256. $\text{H}_\text{lstm}$ is the hidden states of LSTM, set to 1024. $\text{H}_{\zz}$ is the dimension of the latent variable, set to 80. The output of the encoder is the $\mu, \sigma$ and $\zz$. All of them are in shape $[\text{BS},\text{H}_{\zz}]$. 
\item[] \textbf{Decoder}: Decoder is fed with the input sentence span and the generated latent variable. The final reconstructed sentence span is one timestamp delay compared to the input span, i.e., $\xx_{t_1+1:(t_1+1)+m}$.  This is generated by applying beam search to the sequence of output probability over the vocabulary $V$. The $\mathcal{L}_\text{recon}$ is to calculate the cross-entropy loss between the output probability and target sequence span. 
\item[]{\textbf{Content Flow} $r_c$:} We apply Deep Dense Sigmoid Flow (DDSF)~\citep{huang2018neural} to derive the content noise term. To incorporate the domain information, we leverage the domain embedding (after MLP) to parameterize the flow model.
\item[]{\textbf{Style Flow} $r_s$:}  We apply spline flow~\citep{durkan2019neural} to derive the noise term. Similarly, we use the conditional flow~\footnote{The implementation refers to ConditionedSpline in \url{https://docs.pyro.ai/en/stable/_modules/pyro/distributions/transforms/spline.html}} with extra input. The conditional input is the combination of content variable and domain embedding. Specially, they are concatenated firstly and the result are fed into a MLP with Tanh activation to derive a attention score $\alpha$ The conditional input is actually the doctProdcut.
\end{itemize}

\begin{table}[h]
\centering
\begin{tabular}{lll}
\toprule[1pt]
\textbf{Module} & \textbf{Description}&\textbf{Output}   \\
\midrule
\textbf{1. Encoder}& Encoder for Input sentence & \\ \hline
Input $\xx_{t_1:t_1+m}$ & random span of sentence & \\
WordEmb & get word embedding & $\text{BS}\times 
m\times H_\text{word}$ \\
Bi-LSTM & Bi-direction, 2layers & $\text{BS}\times 
m\times H_\text{lstm}$\\
Average Pooling & sentencet-level Rep.& $\text{BS}\times H_{\text{lstm}}$\\
MLP&$\mu$ and $\sigma$& $\text{BS}\times (2\cdot H_\zz)$
\\ 
reparameterization&Sampling&$\text{BS}\times H_\zz$\\\hline
\textbf{2. Domain Embedding} &Embedding Layer&\\ \hline
Input u& number of domain $\rightarrow \text{u}_\text{dim} $&  $\text{BS}\times \text{u}_\text{dim}$ \\ \hline
\textbf{3. Content Flow $r_c$} & & \\ \hline
Input: $\cc,\uu$ & domain as flow conditional input & \\ 
MLP&  $\uu \rightarrow$ conditional context& $\text{BS}\times|\text{H}_{r_c}|$ \\
DDSF & get content noise term $\tilde{\cc}$ & $\text{BS}\times\text{c}_\text{dim}$\\
\hline
\textbf{4. Style Flow $r_s$} & & \\ \hline
Input: $\cc,\uu,\ss$ & content and domain as flow context & \\ 
Concatenate & combine $\cc$ and $\uu$& $\text{BS}\times(\text{c}_\text{dim}+ \text{u}_\text{dim})$ \\
MLP& Tanh activation, get attention score $\alpha$& $\text{BS}\times c_\text{dim}$ \\
Element-wise Multiplication & $\alpha\odot\cc$&$\text{BS}\times\text{c}_\text{dim}$ \\ 
SplineFlow& get style noise term $\tilde{\ss}$ & $\text{BS}\times\text{s}_\text{dim}$\\ \hline
\textbf{5. Decoder $r_s$} & & \\ \hline
Input: $\zz$, $\xx_{t_1:t_1+m}$ & generate the next token & \\
Bi-LSTM & Bi-direction, 2layers & $\text{BS}\times 
m\times H_\text{lstm}$\\
MLP & output word probability&$\text{BS}\times 
m\times \text{V}$\\
\bottomrule[1pt]
\end{tabular}
    \caption{\footnotesize{\textbf{MATTE} overall architecture. DDSF is deep dense sigmoid flow, and SplineFlow is neural spline flow. $m$ is the length of randomly extracted text span from input sentence $\xx$.}}
    \label{tab:model_arc}
\end{table}

\subsubsection{Training}
\paragraph{Training details.} The models were implemented in PyTorch 2.0. and Python 3.9. The VAE network is trained
for a maximum of 25 epochs and a mini-batch size of 64 is used.  We use early stops if the validation reconstruction loss does not decrease for three epochs. For the encoder, we use the Adam optimizer and the learning rate of 0.001. For the decoder, we use SGD with a learning rate of 0.1. For the content and style flow, we use Adam optimizer and the learning rate is 0.001. We set three different random seeds and report the average results.

\paragraph{Training objective.} The VAE-based model is mainly trained with $\mathcal{L}_\text{recon}$ and $\mathcal{L}_\text{VAE}$. We use a training trick to better jointly train the other three objectives. The $\mathcal{L}_\text{sparsity}$ could cram the information of $\ss$ to $\cc$, while the $\mathcal{L}_\text{c-mask}$ is used to prevent the ill-posed situation where $\ss$ have zero influence. Therefore, we involve both $\mathcal{L}_\text{sparsity}$ and $\mathcal{L}_\text{c-mask}$ at the beginning of the training phrase. For $\mathcal{L}_\text{partial}$, it is used to sparsify the influence intersection but their separate influences change very frequently in the initial training stages. So we involve it after 3 epochs.  

\paragraph{Computing hardware and running time.}
We used a machine with the following CPU specifications: AMD EPYC 7282 CPU. We use NVIDIA GeForce RTX 3090 with 24GB GPU memory. It costs approximately 190ms to run our model on this machine per epoch.

\subsection{Additional Results}
This section presents additional results on the hyperparameter sensitivity and the ablation studies on more datasets.

\subsubsection{Hyperparameter Sensitivity}
We discuss the effect of the three loss weights $\lambda_\text{sparsity}$,$\lambda_\text{partial}$ and $\lambda_\text{c-mask}$  in the training objective. We have performed a grid search of $\lambda_\text{sparsity}\in$ [1E-4,1E-3,1E-2], $\lambda_\text{partial}\in $ [3E-5,3E-3,3E-1] and $\lambda_\text{c-mask} \in$ [1E-4,1E-3,1E-2]. The best configuration is $[\lambda_\text{sparisty}, \lambda_\text{partial},\lambda_\text{c-mask}]$ = [1E-4,3E-3,1E-4]. The model performance is relatively sensitive to $\lambda_\text{sparsity}$, so we plot the sentiment accuracy and BLEU as a function of $\lambda_\text{sparsity}$ in Figure~\ref{fig:lambda-spa}.
\begin{figure}[ht]
    \centering
    \includegraphics[width=0.85\textwidth,trim={3 2 2 3},clip]{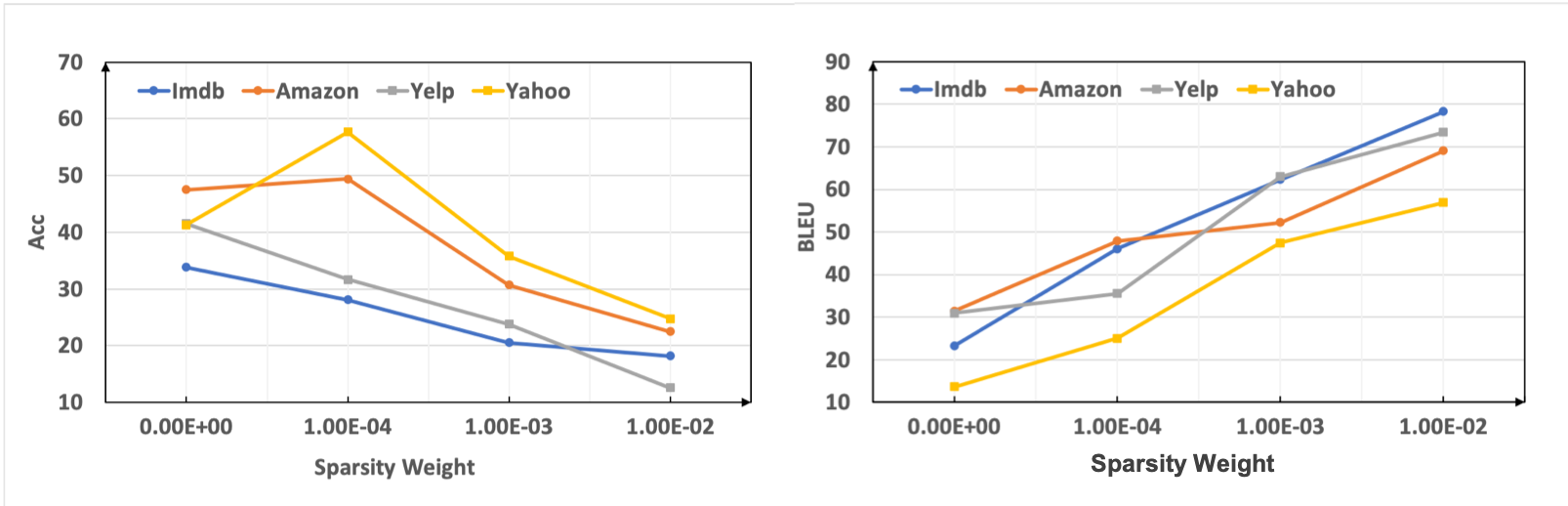}
    \caption{\footnotesize{The Sentiment Acc (left) and BLEU (right) with different $\lambda_\text{sparsity}$. Among the four datasets, sentiment acc generally decreases as the $\lambda_\text{sparsity}$ becomes larger; BLEU increases instead. This observation aligns with our content identifiability theory. We determine the $\lambda_\text{sparsity}$ with the best G-score, i.e., 1E-4.}}
    \label{fig:lambda-spa}
\end{figure}


\subsubsection{Ablation Results on Amazon and Yahoo Datasets}
We show the ablation study of the Amazon and Yahoo datasets in Table~\ref{app:ablation_tab}. The full model achieves the best G-score and PPL on the two datasets. \texttt{CausalDep} improves the BLEU and PPL. $\lambda_\text{sparsity}$ greatly improves the content preservation at the cost of sentiment acc. After incorporating the $\mathcal{L}_\text{partial}$ and $\mathcal{L}_\text{c-mask}$, the sentiment acc is recovered. 
\begin{table*}[ht]
\small
\centering
\resizebox{0.95\textwidth}{!}{%
\begin{tabular}{lllll|llll}
\hline
{}& \multicolumn{4}{c|}{Amazon} & \multicolumn{4}{c}{Yahoo}\\\midrule
Model & Acc($\uparrow$) & BLEU($\uparrow$)  &G-score($\uparrow$) & PPL($\downarrow$)   & Acc($\uparrow$) & BLEU($\uparrow$)  &G-score($\uparrow$) & PPL($\downarrow$)    \\\midrule
Backbone &32.60 &41.08 &30.08 &77.61
&43.92 &25.44&20.28&76.28\\
\texttt{Indep}~\citep{kong2022partial}& \textbf{48.80}$^{\blacktriangle}$&39.50&31.76$^{\blacktriangle}$&77.95 &\textbf{51.70}&23.44&21.12$^{\blacktriangle}$&56.95$^{\blacktriangle}$ \\
\texttt{CausalDep}&33.50$^{\blacktriangle}$&45.25$^{\blacktriangle
}$&32.54$^{\blacktriangle
}$&66.98$^{\blacktriangle
}$&41.50&31.55$^{\blacktriangle
}$&21.39$^{\blacktriangle
}$&64.29$^{\blacktriangle}$\\
:w. / \texttt{$\mathcal{L}_\text{sparsity}$}&27.10&\textbf{62.73}$^{\triangle
}$&34.73$^{\triangle
}$&\textbf{63.37}$^{\triangle
}$&27.32&\textbf{48.21}$^{\triangle}$&25.16$^{\triangle}$&60.03$^{\triangle}$\\
:w. / \texttt{$\mathcal{L}_\text{partial}$}& 33.10&58.54$^{\triangle
}$&34.04&64.42&38.12$^{\triangle
}$&41.74&28.03$^{\triangle
}$&58.04$^{\triangle
}$\\
:w. / \texttt{$\mathcal{L}_\text{c-mask}$(Full)}&34.50$^{\triangle
}$&52.25&\textbf{35.73}$^{\triangle
}$&\textbf{63.37}$^{\triangle
}$&38.45$^{\triangle
}$&42.40$^{\triangle
}$&\textbf{29.01}$^{\triangle
}$&\textbf{56.12}$^{\triangle
}$\\
\bottomrule
\end{tabular}
}
\caption{
\footnotesize
Ablation results on sentiment transfer on two domains. \texttt{CausalDep} incorporates style flow $r_s$ to model dependency of $\cc$ on $\ss$, while \texttt{Indep} assumes the independence between the two variables. 
$^{\blacktriangle
}$ marks the improvements over\texttt{Backbone}, while $^{\triangle
}$ over the \texttt{CausalDep}.}
\label{app:ablation_tab}
\end{table*}

\subsection{\draft{Semantic Preservation Measurement by CTC score}}
\draft{As BLEU has limitations in capturing semantic relatedness beyond literal word-level overlap, we adopt CTC score~\citep{deng-etal-2021-compression} as a complementary evaluation for semantics preservation measurement. For the semantics alignment from $a$ to $b$, CTC considers the matching embeddings, i.e., maximum cosine similarity of all the tokens in $a$ with the tokens in $b$, and vice versa. Then, the final semantic preservation is in \textsl{F1}-style definition with one direction result as precision, and the other one as recall. The evaluation results of all the baselines and MATTE are shown in Table~\ref{tab:ctc_result}. The CTC score still favours Optimus and MATTE, with most inferior results on $\beta$-VAE, which are similar trends under the BLEU evaluation schema. Admittedly, the CTC score differences are less discriminative than BLEU -- this phoneme is also observed in ~\citet{liu2022composable}.}

\begin{table}[ht]
\centering
\resizebox{0.48\textwidth}{!}{%
\begin{tabular}{lrrrr}
\toprule[1pt]
 & \multicolumn{1}{l}{IMDB} & \multicolumn{1}{l}{Yelp} & \multicolumn{1}{l}{Amazon} & \multicolumn{1}{l}{Yahoo} \\
 \hline
BGST & \textbf{0.468} & 0.458 & \textbf{0.472} & \textbf{0.458} \\
$\beta$-VAE & 0.436 & 0.433 & 0.433 & 0.413 \\
JointTrain & 0.456 & 0.462 & 0.455 & 0.437 \\
CPVAE & 0.462 & 0.463 & 0.461 & 0.443 \\
GPT2-FT & 0.459 & 0.459 & 0.458 & 0.448 \\
Optimus & \textbf{0.465} & \textbf{0.468} & 0.465 & 0.446 \\
Matte & \textbf{0.465} & \textbf{0.464} & \textbf{0.466} & \textbf{0.452}
\\
\bottomrule[1pt]
\end{tabular}
}
\caption{\footnotesize{CTC score, a complementary evaluation for semantics preservation. $\beta$-VAE displays the least impressive performance, and Optimus and Matte exhibit the overall best results.}}
\label{tab:ctc_result}
\end{table}


\subsection{Diversity measurements for generated sentences}
\draft{To further demonstrate the generation degradation issue--generate oversimplified and repetitious sentences, we use diversity-2~\citep{li-etal-2016-diversity}, the ratio of distinct two-grams in all the two-grams in the generated sentences to evaluate the transferred sentences. The diversity-2 for original sentences is also included for better comparison. The results in Table~\ref{app:diversity_tab} show that all the other methods except for $\beta$-VAE generated sentences with similar diversity-2 as the original sentences, but the sentences generated by $\beta$-VAE have much lower diversity than the original ones.}

\begin{table}[ht]
\centering
\resizebox{0.65\textwidth}{!}{%
\begin{tabular}{lllll}
\toprule[1pt]
Dataset & IMDB (0.34) & Yelp (0.63) & Amazon (0.64) & Yahoo(0.44) \\
\hline
$\beta$-VAE & 0.11 & 0.46 & 0.37 & 0.22 \\
JointTrain & 0.21 & 0.59 & 0.56 & 0.37 \\
CPVAE & 0.32 & 0.59 & 0.57 & 0.45 \\
MATTE & 0.32 & 0.62 & 0.61 & 0.45 \\
\bottomrule[1pt]
\end{tabular}
}
\caption{\footnotesize{Diversity-2 for the transferred sentences. Diversity for the original sentences is included in the bracket for comparison. $\beta$-VAE has significantly fewer distinct 2-gram than original datasets. This results are consistent with evaluation results on BLEU.
}}
\label{app:diversity_tab}
\end{table}

\subsection{Proof for Theorem~\ref{thm:c_identifiability}} \label{appendix:c_proof}
The original Assumption~\ref{assump:c_identifiability} and Theorem~\ref{thm:c_identifiability} are copied below for reference.

\cassumption*

\ctheorem*

\begin{proof}
    We first define the notation $\zz = [\cc, \ss]$ and the indeterminacy function:
    \begin{align*}
        h := \hat{g}^{-1} \circ g,
    \end{align*}
    which is an invertible function $ h: \cZ \to \hat{\cZ} $ as $g$ is invertible by Assumption~\ref{assump:c_identifiability}-\ref{assump:invertibility}.
    According to the chain rule, we have the following relation among the Jacobian matrices:
    \begin{align} \label{eq:jacobian_multiplication}
         \mJ_{\hat{g}} (\hat{\zz}) = \mJ_{g} (\zz) \mJ^{-1}_{h} (\zz) .
    \end{align}

    We define the support notations as follows:
    \begin{align*}
        \cG &:= \text{supp}( \mJ_{g} (\zz) ), \\
        \hat{\cG} &:= \text{supp}( \mJ_{\hat{g}} (\hat{\zz}) ), \\
        \cT &:= \text{supp} (\mJ^{-1}_{h} (\zz)).
    \end{align*}

    In the following, we will show that $(j_{c}, j_{\hat{s}}) \notin \cT$ for any $ j_{c} \in \{1, \dots, d_{c} \} $ and $j_{\hat{s}} \in \{ d_{c} + 1, \dots, d_{c} + d_{s} \}$. That is, $ [\mJ^{-1}_{h} (\zz)]_{j_{c}, j_{\hat{s}}} = 0 $, for any $ j_{c} \in \{1, \dots, d_{c} \} $ and $j_{\hat{s}} \in \{ d_{c} + 1, \dots, d_{c} + d_{s} \}$, which implies that $\cc$ is not influenced by $\hat{\ss}$.

    Because of Assumption~\ref{assump:c_identifiability}-\ref{assump:span}, for any $i \in \{1, \dots, d_{v_{1}} + d_{v_{2}} \} $, there exists $ \{ \zz^{(\ell)} \}_{\ell = 1}^{ \abs{ \cG_{i, :} } } $, such that $ \text{span}( \{ \mJ_{g} (\zz^{(\ell)})_{i,:} \}_{\ell = 1}^{ \abs{ \cG_{i, :} } } ) = \R^{d_{z}}_{ \cG_{i, :} } $. 

    Since $ \{ \mJ_{g} (\zz^{(\ell)})_{i,:} \}_{\ell = 1}^{ \abs{ \cG_{i, :} } } $ forms a basis of $ \R^{d_{z}}_{ \cG_{i, :} } $, for any $j_{0} \in \cG_{i,:} $, we can express canonical basis vector $ \ee_{j_{0}} \in \R^{d_{z}}_{ \cG_{i, :} } $ as:
    \begin{align}
        \ee_{j_{0}} = \sum_{ \ell \in \cG_{i,:} } \alpha_{\ell} \cdot \mJ_{g} ( \zz^{(\ell)} )_{i,:},
    \end{align}
    where $ \alpha_{\ell} \in \R $ is a coefficient.

    Also, following Assumption~\ref{assump:c_identifiability}-\ref{assump:span}, there exists a deterministic matrix $\mT$ where $\mT_{j_{1}, j_{2}} \neq 0$ iff $(j_{1}, j_{2}) \in \cT$ and
    \begin{align}
        \mT_{j_{0}, :} = \ee_{j_{0}}^{\top} \mT = \sum_{ \ell \in \cG_{i,:} } \alpha_{\ell} \cdot \mJ_{g} ( \zz^{(\ell)} )_{i,:} \mT \in \R^{d_{z}}_{ \hat{\cG}_{i, :}},
    \end{align}
    where $ \in $ is because each element in the summation belongs to $ \R^{d_{z}}_{ \hat{\cG}_{i, :}} $.

    Therefore, 
    \begin{align*}
        \forall j \in \cG_{i,:}, \mT_{j,:} \in \R^{d_{z}}_{ \hat{\cG}_{i, :}}.
    \end{align*}
    Equivalently, we have:
    \begin{align} \label{eq:connection}
        \forall (i, j) \in \cG, \quad \{ i\} \times \cT_{j,:} \subset \hat{\cG}.
    \end{align}

    As both $ \mJ_{g} $ and $ \mJ_{\hat{g}} $ are invertible, $\mJ_{h} (\zz)$ is an invertible matrix and thus has a non-zero determinant.
    Expressing $\mJ_{h}(\zz)$ with the Leibniz formulae gives:
    \begin{align} \label{eq:leibniz}
        \text{det}(\mJ_{h}(\zz)) = \sum_{ \sigma \in \cP_{d_{z}} } \left( \text{sign}(\sigma) \prod_{j=1}^{d_{z}} \mJ_{g}(\zz)_{\sigma(j), j} \right) \neq 0,
    \end{align}
    where $ \cP_{d_{z}} $ is the set of all $d_{z}$-permutations.
    
    Equation~\ref{eq:leibniz} indicates that there exists $ \sigma \in \cP_{d_{z}}$, such that $ \prod_{j=1}^{d_{z}} \mJ_{g}(\zz)_{\sigma(j), j} \neq 0 $.
    Equivalently, we have
    \begin{align} \label{eq:permutation_belong}
        \forall j \in [d_{z}], \, (\sigma(j), j) \in \cT.
    \end{align}

    Therefore, for a specific $j_{\hat{s}} \in \{d_{c}+1, \dots, d_{z}\}$, it follows that $ (\sigma(j_{\hat{s}}), j_{\hat{s}}) \in \cT $.
    Further, Equation~\ref{eq:connection} shows that for any $i_{x} \in [d_{x}]$, s.t., $(i_{x}, \sigma(j_{\hat{s}})) \in \cG$, we have $ \{ i_{x} \} \times \cT_{ \sigma(j_{\hat{s}}), :} \subseteq \hat{\cG} $.
    Together, it follows that
    \begin{align} \label{eq:permutation_connection}
        (i_{x}, \sigma(j_{\hat{s}})) \in \cG \implies (i_{x}, j_{\hat{s}}) \in \hat{\cG}.
    \end{align}

    Equation~\ref{eq:permutation_connection} suggests that the column $\sigma(j_{\hat{s}})$ of the true generating function support $\cG$ is included in the column $j_{\hat{s}}$ of the estimated generating function support $\hat{\cG}$.
    Together with Assumption~\ref{assump:c_identifiability}-\ref{assump:sparse_influence}, it follows that 
    \begin{align} \label{eq:lowerbound}
        \sum_{j_{\hat{s}} \in \{ d_{c}+1, \dots, d_{z} \}  } \norm{ \hat{\cG}_{:,j_{\hat{s}}} }_{0} \ge \sum_{j_{s} \in \{ d_{c}+1, \dots, d_{z} \} } \norm{ \cG_{:, j_{s} } }_{0},
    \end{align}
    where the permutation $\sigma(\cdot)$ connects the indices of $ \ss $ and those of $ \hat{\ss} $. We note that Equation~\ref{eq:lowerbound} is a lower-bound of the objective Equation~\ref{eq:c_objecive}, which can be attained by a minimizer $ \hat{g} = g $.

    In the following, we show by contradiction that the support of $\mJ_{h}^{-1} (\zz)$ does not contain $ (j_{c}, j_{\hat{s}}) $, for any $ j_{c} \in [d_{c}] $ and any $ j_{\hat{s}} \in \{ d_{c}+1, \dots, d_{c}\} $, i.e., $ (j_{c}, j_{\hat{s}}) \notin \cT $.

    We suppose that a specific $ (j'_{c}, j'_{\hat{s}}) \in \cT $, where $ j'_{c} \in [d_{c}] $ and any $ j'_{\hat{s}} \in \{ d_{c}+1, \dots, d_{c}\} $.
    We note that the argument for Equation~\ref{eq:permutation_connection} also applies to $ (j_{1}, j_{2}) \in \cT$ for any $ j_{1}, j_{2} \in [d_{z}] $.
    Thus, we would have 
    \begin{align} \label{eq:support_transfer_c}
        (j'_{c}, j'_{\hat{s}}) \in \cT \implies (i_{x}, j'_{\hat{s}}) \in \hat{\cG}, \quad \forall i_{x} \in \{ i \in [d_{x}]: (i_{x}, j'_{c}) \in \cG\}.  
    \end{align}
    It would follow that
    \begin{align} \label{eq:contradiction}
        \begin{split}
            \sum_{j_{\hat{s}} \in \{ d_{c}+1, \dots, d_{z} \} \setminus \{ j'_{\hat{s}} \} } \norm{ \hat{\cG}_{:,j_{\hat{s}}} }_{0} + \norm{ \hat{\cG}_{:,j'_{\hat{s}}} }_{0} 
            &\ge \sum_{j_{\hat{s}} \in \{ d_{c}+1, \dots, d_{z} \} \setminus \{ j'_{\hat{s}} \} } \norm{ \cG_{:,\sigma(j_{\hat{s}})} }_{0} + \norm{ \hat{\cG}_{:,j'_{\hat{s}}} }_{0} \\
            &\ge \sum_{j_{\hat{s}} \in \{ d_{c}+1, \dots, d_{z} \} \setminus \{ j'_{\hat{s}} \} } \norm{ \cG_{:,\sigma(j_{\hat{s}})} }_{0} + \norm{ \cG_{:,\sigma(j'_{\hat{s}})} \cup \cG_{:,j'_{c} } }_{0} \\
            &\ge \sum_{j_{\hat{s}} \in \{ d_{c}+1, \dots, d_{z} \} \setminus \{ j'_{\hat{s}} \} } \norm{ \cG_{:,\sigma(j_{\hat{s}})} }_{0} + \norm{ \cG_{:,j'_{c} } }_{0} \\
            &\underbrace{>}_{(1)} \sum_{j_{s} \in \{ d_{c}+1, \dots, d_{z} \} } \norm{ \cG_{:, j_{s} } }_{0},
        \end{split}
    \end{align}
    where the inequality (1) is due to Assumption~\ref{assump:c_identifiability}-\ref{assump:sparse_influence} that the influence of $\cc$ on $\xx$ is denser than that of $\ss$.

    However, as discussed above, there exists an optimizer that attains the lower-bound Equation~\ref{eq:lowerbound}.
    Equation~\ref{eq:contradiction} contradicts the minimization objective Equation~\ref{eq:c_objecive}.
    Therefore, $ (j'_{c}, j'_{\hat{s}}) \not\in \cT $, for any $ j'_{c} \in [d_{c}] $ and any $ j'_{\hat{s}} \in \{ d_{c}+1, \dots, d_{c}\} $.

    As discussed above, this implies that $\cc$ is not influenced by $\hat{\ss}$.
    Further, it follows from the invertibility of $h(\cdot)$ that $ [\mJ_{h} (\zz)]_{j_{\hat{c}}, j_{s}} = 0 $, for any $ j_{\hat{c}} \in \{1, \dots, d_{c} \} $ and $j_{s} \in \{ d_{c} + 1, \dots, d_{c} + d_{s} \}$, which implies that $\hat{\cc}$ is not influenced by $\ss$.
    These two conditions and the invertibility of $h(\cdot)$ imply that $\hat{\cc}$ and $\cc$ form a one-to-one mapping.

\end{proof}

\subsection{Proof for Theorem~\ref{thm:s_identifiability}} \label{appendix:s_proof}

The original Assumption~\ref{assump:sparse_influence} and Theorem~\ref{thm:s_identifiability} are copied below for reference.

\sassumption*

\stheorem*



\begin{proof}
    As shown in Section~\ref{appendix:c_proof}, there exists a $d_{z}$-permutation $ \sigma (\cdot) $ such that $ \forall j \in [d_{z}], (\sigma(j), j) \in \cT $.
    Also, we have shown in Theorem~\ref{thm:c_identifiability} that $ (j_{c}, j_{\hat{s}}) \notin \cT $ for $ j_{c} \in [d_{c}] $ and $ j_{\hat{s}} \in \{d_{c}+1, \dots, d_{z}\} $, which implies that $ \sigma(j_{\hat{s}}) \in \{ d_{c}+1, \dots, d_{z}\} $.
    Thus, it follows that for any $ j_{\hat{c}} \in [d_{c}], \, \sigma(j_{\hat{c}}) \in [d_{c}] $.

    In the following, we show by contradiction that $ (j_{s}, j_{\hat{c}}) \not\in \cT $ for any $ j_{s} \in \{ d_{c}+1, \dots, d_{z} \} $ and $ j_{\hat{c}} \in [d_{c}] $.
    We suppose that $ (j'_{s}, j'_{\hat{c}}) \in \cT $.
    Analogous to Equation~\ref{eq:support_transfer_c}, we would have that
    \begin{align} \label{eq:support_transfer_s}
        (j'_{s}, j'_{\hat{c}}) \in \cT \implies (i_{x}, j'_{\hat{c}}) \in \hat{\cG}, \quad \forall i_{x} \in \{ i \in [d_{x}]: (i_{x}, j'_{s}) \in \cG\}.  
    \end{align}

    It would follow that $\hat{\cG}_{:, j'_{\hat{c}}} \supseteq \cG_{:, \sigma(j'_{\hat{c}})} \cup \cG_{:,j'_{s}} $.
    Also, to attain the objective Equation~\ref{eq:c_objecive} in Theorem~\ref{thm:c_identifiability}, we have $ j'_{\hat{s}} := \sigma^{-1} (j'_{s}) \in \{ d_{c}+1, \dots, d_{z} \} $, s.t., $ \hat{\cG}_{:, j'_{\hat{s}}} = \cG_{:, j'_{s}} $.
    It would follow that $ \hat{\cG}_{:,j'_{\hat{c}}} \supseteq \hat{\cG}_{:,j'_{\hat{s}}}$.

    Further, we would have
    \begin{align} \label{eq:large_overlap}
        \norm{ \hat{\cG}_{:, j'_{\hat{c}}} \cap \hat{\cG}_{:, j'_{\hat{s}}} }_{0} 
        = \norm{ \hat{\cG}_{:, j'_{\hat{s}}} }_{0}
        = \norm{ \cG_{:, j'_{s}}}_{0}
        \underbrace{>}_{(2)} \norm{ \cG_{:, \sigma(j'_{\hat{c}})} \cap \cG_{:, \sigma(j'_{\hat{s}})}}_{0},
    \end{align}
    where (2) is due to Assumption~\ref{assump:orthogonal_influences}.

    We note that the lower-bound for Equation~\ref{eq:objective_s} is
    \begin{align}
        \sum_{(j_{\hat{c}}, j_{\hat{s}}) \in \{1, \dots, d_{c} \} \times \{d_{c} + 1, \dots, d_{z} \}} \norm{\hat{\cG}_{:, j_{\hat{c}}} \cap \hat{\cG}_{:, j_{\hat{s}}} }_{0} 
        & \ge \sum_{(j_{\hat{c}}, j_{\hat{s}}) \in \{1, \dots, d_{c} \} \times \{d_{c} + 1, \dots, d_{z} \}} \norm{ \cG_{:, \sigma(j_{\hat{c}})} \cap \cG_{:, \sigma(j_{\hat{s}})} }_{0} \\
        & = \sum_{(j_{c}, j_{s}) \in \{1, \dots, d_{c} \} \times \{d_{c} + 1, \dots, d_{z} \}} \norm{ \cG_{:, j_{c}} \cap \cG_{:, j_{s}} }_{0},
    \end{align}
    which can be achieved by $ \cG = \hat{\cG} $.
    Note that the lower-bounds for both Equation~\ref{eq:c_objecive} and Equation~\ref{eq:objective_s} can be attained simultaneously by $\cG = \hat{\cG}$. Hence, optimizing the sum of the two objectives does not alter the optimal value of either.

    Applying a similar argument as that in Equation~\ref{eq:lowerbound}, we would have that
    \begin{align} \label{eq:contradiction_s}
        \begin{split}
            & \sum_{(j_{\hat{c}}, j_{\hat{s}}) \in \{1, \dots, d_{c} \} \times \{d_{c} + 1, \dots, d_{z} \}} \norm{\hat{\cG}_{:, j_{\hat{c}}} \cap \hat{\cG}_{:, j_{\hat{s}}} }_{0} \\
            & = \sum_{(j_{\hat{c}}, j_{\hat{s}}) \in \{1, \dots, d_{c} \} \times \{d_{c} + 1, \dots, d_{z} \} \setminus \{ (j'_{\hat{c}}, j'_{\hat{s}} ) \} } \norm{\hat{\cG}_{:, j_{\hat{c}}} \cap \hat{\cG}_{:, j_{\hat{s}}} }_{0} + \norm{\hat{\cG}_{:, j'_{\hat{c}}} \cap \hat{\cG}_{:, j'_{\hat{s}}} }_{0} \\
            &\ge \sum_{(j_{\hat{c}}, j_{\hat{s}}) \in \{1, \dots, d_{c} \} \times \{d_{c} + 1, \dots, d_{z} \} \setminus \{ (j'_{\hat{c}}, j'_{\hat{s}} ) \} } \norm{\cG_{:, \sigma(j_{\hat{c}})} \cap \cG_{:, \sigma(j_{\hat{s}})} }_{0} + \norm{ \hat{\cG}_{:, j'_{\hat{c}}} \cap \hat{\cG}_{:, j'_{\hat{s}}} }_{0} \\
            &\underbrace{>}_{(3)} \sum_{(j_{\hat{c}}, j_{\hat{s}}) \in \{1, \dots, d_{c} \} \times \{d_{c} + 1, \dots, d_{z} \} \setminus \{ (j'_{\hat{c}}, j'_{\hat{s}} ) \} } \norm{\cG_{:, \sigma(j_{\hat{c}})} \cap \cG_{:, \sigma(j_{\hat{s}})} }_{0} + \norm{ \cG_{:, \sigma(j'_{\hat{c}})} \cap \cG_{:, \sigma(j'_{\hat{s}})}}_{0} \\
            & = \sum_{(j_{c}, j_{s}) \in \{1, \dots, d_{c} \} \times \{d_{c} + 1, \dots, d_{z} \}} \norm{ \cG_{:, j_{c}} \cap \cG_{:, j_{s}} }_{0},
        \end{split}
    \end{align}
    where (3) is due to Equation~\ref{eq:large_overlap}.
    Hence, this was not the minimizer of Equation~\ref{eq:objective_s}.
    By contradiction, we have that $ (j_{s}, j_{\hat{c}}) \notin \cT $ for any $ j_{s} \in \{ d_{c}+1, \dots, d_{z} \} $ and $ j_{\hat{c}} \in [d_{c}] $.
    This implies that $\ss$ is not influenced by $\hat{\cc}$.
    Further, it follows from the invertibility of $h(\cdot)$ that $ [\mJ_{h} (\zz)]_{j_{\hat{s}}, j_{c}} = 0 $, for any $ j_{\hat{s}} \in \{d_{c}+1, \dots, d_{z} \} $ and $j_{c} \in [d_{c}] $, which implies that $\hat{\ss}$ is not influenced by $\cc$.
    These two conditions and the invertibility of $h(\cdot)$ imply that $\hat{\ss}$ and $\ss$ form a one-to-one mapping.

\end{proof}




\end{document}